\newcommand{\J}{{\mathbf J}}
\newcommand{\Jac}[1]{\J_f({#1})}
\newcommand{\E}{{\mathbf E}}
\renewcommand{\Re}{{\mathbb R}}	
\newcommand{\mX}{{\mathbf X}}	
\newcommand{\w}{{\mathbf w}}	
\renewcommand{\v}{{\mathbf v}}	
\newcommand{\x}{{\mathbf x}}	
\newcommand{\z}{{\mathbf z}}	
\newcommand{\y}{{\mathbf y}}	
\renewcommand{\u}{{\mathbf u}}
\renewcommand{\v}{{\mathbf v}}
\renewcommand{\b}{{\mathbf b}}
\newcommand{\W}{{\mathbf W}}
\newtheorem{lemma}{Lemma}
\DeclareMathOperator*{\argmin}{arg\,min}
\DeclareMathOperator*{\argmax}{arg\,max}
\newcommand{\uSet}{{\mathcal U}}
\newtheorem{theorem}{Theorem}
\title{Adversarial Training is a Form of Data-dependent Operator Norm Regularization}
\author{
  Kevin Roth$^*$\\ % \thanks{Equal Contribution} \\
  Dept of Computer Science \\
  ETH Z\"urich\\
  \texttt{\small kevin.roth@inf.ethz.ch} \\
  \And
  Yannic Kilcher$^*$\\
  Dept of Computer Science \\
  ETH Z\"urich\\
  \texttt{\small yannic.kilcher@inf.ethz.ch} \\
  \And
  Thomas Hofmann \\
  Dept of Computer Science \\
  ETH Z\"urich\\
  \texttt{\small thomas.hofmann@inf.ethz.ch} \\   
}
\begin{document}

\maketitle

\vspace{-2mm}
\begin{abstract}
\vspace{-1mm}
We establish a theoretical link between adversarial training and operator norm regularization for deep neural networks. 
Specifically, we prove that $\ell_p$-norm constrained projected gradient ascent based adversarial training with an $\ell_q$-norm loss on the logits of clean and perturbed inputs is equivalent to data-dependent (p, q) operator norm regularization.
This fundamental connection confirms the long-standing argument that a network's sensitivity to adversarial examples is tied to its spectral properties and hints at novel ways to robustify and defend against adversarial attacks.
We provide extensive empirical evidence on state-of-the-art network architectures to support our theoretical results. 
\end{abstract}

% !TEX root =  00_main.tex

\vspace{-2mm}
\section{Introduction}
While deep neural networks are known to be robust to random noise, it has been shown that their accuracy dramatically deteriorates in the face of so-called adversarial examples \cite{biggio2013evasion, szegedy2013intriguing, goodfellow2014explaining}, i.e.\ small perturbations of the input signal, often imperceptible to humans, that are sufficient to induce large changes in the model output. 
This apparent vulnerability is worrisome as deep nets start to proliferate in the real-world, including in safety-critical deployments.

The most direct strategy of robustification, called adversarial training, aims to robustify a machine learning model by training it against an adversary that perturbs the examples before passing them to the model \cite{goodfellow2014explaining, kurakin2016adversarial, miyato2015distributional, miyato2018virtual, madry2018towards}.
A different strategy of defense is to detect whether the input has been perturbed, by detecting characteristic regularities either in the adversarial perturbations themselves or in the network activations they induce \cite{grosse2017statistical, feinman2017detecting, xu2017feature, metzen2017detecting, carlini2017adversarial, roth2019odds}.

Despite practical advances in finding adversarial examples and defending against them, 
it is still an open question whether
(i) adversarial examples are unavoidable, i.e.\ no robust model exists, cf.\ \cite{fawzi2018adversarial, gilmer2018adversarial}, 
(ii) learning a robust model requires too much training data, 
cf.\ \cite{schmidt2018adversarially}, 
(iii) learning a robust model from limited training data is possible but computationally intractable \cite{bubeck2019adversarial}, or 
(iv) we just have not found the right model / training algorithm yet.

\begin{figure}[h!]
\centering
\begin{tabular}{ccc}
\adjustbox{valign=c, raise=-7pt}{$\tilde{\v}_{k} \leftarrow \Jac{\x}^\top \u_{k}$}\hspace{-2mm} &
\adjustbox{valign=c}{\adjincludegraphics[width=0.28\linewidth, trim={{0.27\width} {0\height} {0.3\width} {0.0\height}},clip]{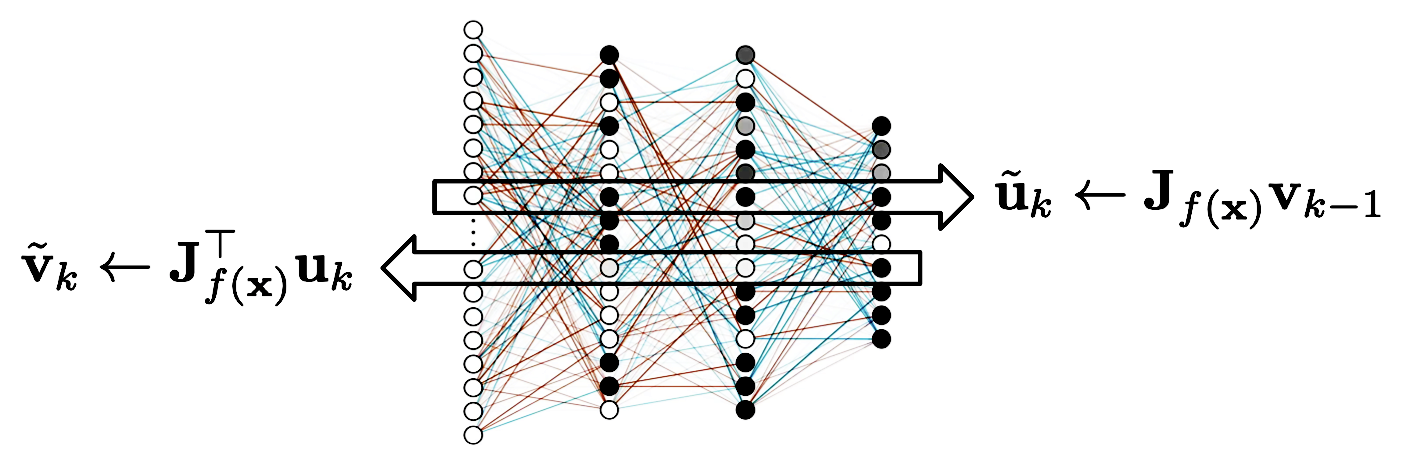}} &
\hspace{-2.5mm}\adjustbox{valign=c, raise=6pt}{$\tilde{\u}_{k} \leftarrow \Jac{\x} \v_{k-1}$} 
\end{tabular}
\caption{
Our theoretical results suggest to think of iterative adversarial attacks as power-method-like forward-backward passes (indicated by the arrows) through (the Jacobian $\Jac{\x}$ of) the network. 
} 
\label{fig:ATExplainer}
\end{figure}

\vspace{-2mm}
In this work, we investigate adversarial vulnerability in neural networks by focusing on the attack algorithms used to find adversarial examples.
In particular, we make the following contributions: 
\begin{itemize}[leftmargin=8mm]
\vspace{-1mm}
\item We present a data-dependent variant of spectral norm regularization that directly regularizes large singular values of a neural network in regions that are supported by the data, as opposed to existing methods that regularize a global, data-independent upper bound. 
\item We prove that $\ell_p$-norm constrained projected gradient ascent based adversarial training with an $\ell_q$-norm loss on the logits of clean and perturbed inputs is equivalent to data-dependent (p, q) operator norm regularization.
\item We conduct extensive empirical evaluations showing among other things that 
(i) adversarial perturbations align with dominant singular vectors,
(ii) adversarial training 
dampens the singular values, and 
(iii) adversarial training and data-dependent spectral norm regularization give rise to models that are significantly more linear around data than normally trained ones. 
\end{itemize}

% !TEX root =  00_main.tex

\enlargethispage{1\baselineskip}
\vspace{-1mm}
\section{Related Work}
\label{sec:relwork}
\vspace{-1mm}
The idea that a conservative measure of the sensitivity of a network to adversarial examples can be obtained by computing the spectral norm of the individual weight layers appeared already in the seminal work of Szegedy et al.\ \cite{szegedy2013intriguing}.
A number of works have since suggested to regularize the global spectral norm \cite{yoshida2017spectral, miyato2018spectral, bartlett2017spectrally, farnia2018generalizable}  
and Lipschitz constant \cite{cisse2017parseval, hein2017formal, tsuzuku2018lipschitz, raghunathan2018certified} 
as a means to improve model robustness.
Input gradient regularization has also been suggested \cite{gu2014towards, lyu2015unified, cisse2017parseval}.
Adversarial robustness has also been investigated via robustness to random noise \cite{fawzi2018analysis, fawzi2016robustness} and decision boundary tilting \cite{tanay2016boundary}.

The most direct and popular strategy of robustification, however, is to use adversarial examples as data augmentation during training
\cite{goodfellow2014explaining, shaham2018understanding, sabour2015adversarial, kurakin2016adversarial, papernot2016transferability, moosavi2016deepfool, miyato2018virtual, madry2018towards}. 
Adversarial training can be viewed as a variant of (distributionally) robust optimization \cite{el1997robust, xu2009robustness, bertsimas2018characterization, namkoong2017variance, sinha2018certifying, gao2016distributionally}
where a machine learning model is trained to minimize the worst-case loss against an adversary that can shift the entire training data within an uncertainty set.
Interestingly, for certain problems and uncertainty sets, such as for linear regression and induced matrix norm balls, robust optimization has been shown to be equivalent to regularization \cite{el1997robust, xu2009robustness, bertsimas2018characterization, bietti2019kernel}. 
Similar results have been obtained also for (kernelized) SVMs~\cite{xu2009robustness}. 

We extend these lines of work by establishing a theoretical link between adversarial training and data-dependent operator norm regularization.
This fundamental connection confirms the long-standing argument that a network's sensitivity to adversarial examples is tied to its spectral properties 
and opens the door for robust generalization bounds via data-dependent operator norm based ones.

% !TEX root =  00_main.tex

\vspace{-1mm}
\section{Background}
\label{sec:background}
\vspace{-1mm}
\textbf{Notation.} In this section we rederive global spectral norm regularization {\`a} la Yoshida \& Miyato \cite{yoshida2017spectral}, while also setting up the notation for later.
Let $\x$ and $y$ denote input-label pairs generated from a data distribution~$P$.
Let $f : \mathcal{X} \subset \Re^n \to \Re^d$ denote the logits of a $\theta$-parameterized piecewise linear classifier, i.e.\
$f(\cdot) = \W^L \phi^{L-1}(\W^{L-1}\phi^{L-2}(\dots) + \b^{L-1}) + \b^L$, where $\phi^\ell$ is the activation function, 
and $\W^\ell$, $\b^\ell$ denote the layer-wise weight matrix\footnote{Convolutional layers can be constructed as matrix multiplications by converting them into a Toeplitz matrix.} and bias vector, collectively denoted~by~$\theta$. 
Let us furthermore assume that each activation function is a ReLU (the argument can easily be generalized to other piecewise linear activations). 
In this case, the activations $\phi^\ell$ act as input-dependent diagonal matrices $\Phi_{\x}^\ell := \text{diag}(\phi^\ell_\x)$, 
where an element in the diagonal $\phi^\ell_\x := \mathbf{1}(\tilde \x^\ell \ge 0)$ is one if the corresponding pre-activation $\tilde \x^\ell := \W^\ell \phi^{\ell-1}(\cdot) + \b^\ell$ is positive and zero otherwise.

Following Raghu et al.\ \cite{raghu2017expressive}, we call $\phi_{\x} := (\phi_{\x}^1, \dots, \phi_{\x}^{L-1}) \in \{0,1\}^m$ the ``activation pattern'', where $m$ is the number of neurons in the network. 
For any activation pattern $\phi \in \{0,1\}^m$ we can define the preimage $X(\phi) := \{ \x \in \Re^n : \phi_{\x} = \phi \}$,
inducing a partitioning of the input space via $\Re^n = \bigcup_\phi X(\phi)$.
Note that some $X(\phi) = \emptyset$, as not all combinations of activiations may be feasible.
See Figure~1 in \cite{raghu2017expressive} or Figure~3 in \cite{novak2018sensitivity} for an illustration of ReLU tesselations of the input space.

\textbf{Linearization.} We can linearize $f$ within a neighborhood around $\x$ as follows 
\begin{align}
f(\x + \Delta \x) \simeq f(\x) + \Jac{\x} \Delta \x\ , \quad (\text{with equality if }\ \x + \Delta \x \in X(\phi_{\x})) ,
\label{eq:linearization}
\end{align}
where $\Jac{\x}$ denotes the Jacobian of $f$ at $\x$
\begin{align}
\Jac{\x} = \W^L \cdot \Phi_{\x}^{L-1} \cdot \W^{L-1} \cdot \Phi_{\x}^{L-2} \cdot\cdot\cdot \Phi_{\x}^{1} \cdot \W^1 \,.
\label{eq:jacobiandecomposition}
\end{align}

For small $|| \Delta \x ||_2 \neq 0$, we have the following bound
\begin{align}
\hspace{-2mm}\frac{|| f(\x \!+\! \Delta \x) \!-\! f(\x) ||_2}{|| \Delta \x ||_2} \simeq 
\frac{|| \Jac{\x} \Delta \x ||_2}{ || \Delta \x ||_2 } \leq \sigma( \Jac{\x} ) := \max_{ \v : || \v ||_2 =1} || \Jac{\x} \v ||_2
\label{eq:spectralnormdefinition}
\end{align}
where $\sigma( \Jac{\x} )$ is the \textit{spectral norm} (largest singular value) of the linear operator $\Jac{\x}$.
From a robustness perspective we want $\sigma( \Jac{\x} )$ to be small in regions that are supported by the data.

\textbf{Global Spectral Norm Regularization.} Based on the factorization in Eq.~\ref{eq:jacobiandecomposition} and the non-expansiveness of the activations, $\sigma(\Phi_{\x}^\ell) \leq 1$, $\forall \ell \in \{1,...,L\!-\!1\}$, 
Yoshida \& Miyato~\cite{yoshida2017spectral} suggested to upper-bound the spectral norm of the Jacobian by the \textit{data-independent}(!) product of the spectral norms of the weight matrices 
$\sigma( \Jac{\x} )  \leq {\prod_{\ell = 1}^L} \sigma(\W^\ell), \ \forall \x \in \mathcal{X}$.
The layer-wise spectral norms $\sigma^\ell := \sigma(\W^\ell)$ 
can be computed iteratively using the power method$^2$. %manual footnote
Starting from a random $\v_0$, 
\begin{equation}
\begin{aligned}
\u^\ell_k \leftarrow \tilde \u^\ell_k / ||\tilde \u^\ell_k||_2 \ , \ \tilde \u^\ell_k \leftarrow \W^\ell \v^\ell_{k-1} \ , \quad
\v^\ell_k \leftarrow  \tilde \v^\ell_k / ||\tilde \v^\ell_k||_2 \ , \ \tilde \v^\ell_k \leftarrow (\W^\ell)^\top \u^\ell_{k} \,.
\label{eq:dataindependentSN}
\end{aligned}
\end{equation}
The (final) singular value can be computed as $\sigma^\ell_k = (\u^{\ell}_k)^\top \W^\ell \v^\ell_{k}$.

Yoshida \& Miyato \cite{yoshida2017spectral} suggest to turn this upper-bound into a global (data-independent) regularizer
\vspace{-1mm}
\begin{align}
\min \theta \rightarrow \E_{(\x,y )\sim \hat{P}} \left[ \ell(y, f(\x)) \right] + \frac{\lambda}{2} {\sum_{\ell = 1}^L} \sigma(\W^\ell)^2 \ , 
\label{eq:YoshidaReg}
\end{align}
where $\ell(\cdot, \cdot)$ denotes a classification loss.
It can be verified that $\nabla_\W \sigma(\W)^2/2 = \sigma \u \v^\top$, with $\sigma$, $\u$, $\v$ being the principal singular value/vectors of $\W$. Eq.~\ref{eq:YoshidaReg} thus effectively adds a term $\lambda \sigma^\ell \u^\ell (\v^\ell)^\top$ to the parameter gradient of each layer $\ell$. In terms of computational complexity, because the global regularizer decouples from the empirical loss, the power-method can be amortized across data-points, hence a single power method iteration per parameter update step usually suffices in practice \cite{yoshida2017spectral}.

\vspace{-2mm}
\paragraph{Global vs.\ Local Regularization}
\label{sec:globalvslocalregularization}
Global bounds trivially generalize from the training to the test set.
The problem however is that they can be arbitrarily loose, e.g.\ penalizing the spectral norm over irrelevant regions of the ambient space. 
To illustrate this, consider the ideal robust classifier that is essentially piecewise constant on class-conditional regions, with sharp transitions between the classes. 
The global spectral norm will be heavily influenced by the sharp transition zones, 
whereas a local data-dependent bound can adapt to regions where the classifier is approximately constant \cite{hein2017formal}.
We would therefore expect a global regularizer to have the largest effect in the empty parts of the input space.
A local regularizer, on the contrary, has its main effect around the data manifold.

\enlargethispage{1\baselineskip}

% !TEX root =  00_main.tex

\section{Adversarial Training is a Form of Operator Norm Regularization}
\label{sec:ATisaformofONR}
\vspace{-1mm}
\subsection{Data-dependent Spectral Norm Regularization}
\label{sec:datadepSNR}
We now show how to directly regularize the \textit{data-dependent} spectral norm of the Jacobian $\Jac{\x}$.
Assuming that the dominant singular value is non-degenerate\footnote{Due to numerical errors, we can safely assume that the dominant singular value is non-degenerate.}, 
the largest singular value and the corresponding left and right singular vectors can efficiently be computed via the power method. Starting from $\v_0$,
we successively compute the unnormalized (denoted with a tilde) and normalized approximations to the dominant singular vectors,
\begin{equation}
\begin{aligned}
\label{eq:datadepspectralnorm}
\u_{k} &\leftarrow \tilde{\u}_{k} / || \tilde{\u}_{k} ||_2 \ , \ \tilde{\u}_{k} \leftarrow \Jac{\x} \v_{k-1} \\ 
\v_{k} &\leftarrow \tilde{\v}_{k} / || \tilde{\v}_{k} ||_2 \ , \,\ \tilde{\v}_{k} \leftarrow \Jac{\x}^\top \u_{k} \, .
\end{aligned}
\end{equation}
The (final) singular value can then be computed via $\sigma(\Jac{\x}) = \u^\top \Jac{\x} \v$. 
For brevity we suppress the dependence of $\u, \v$ on $\x$ in the rest of the paper.
Note that $\v$ gives the direction in input space that corresponds to the steepest ascent of the linearized network along $\u$. 
See Figure~\ref{fig:ATExplainer} for an illustration of the forward-backward passes through ($\Jac{\x}$ of) the network. 

We can turn this into a regularizer by learning the parameters $\theta$ via
\vspace{-1mm}
\begin{align}
& \min \theta \rightarrow \E_{(\x,y )\sim \hat{P}}\Big[ \ell(y, f(\x)) + \frac{\tilde\lambda}{2} \sigma(\Jac{\x})^2 \Big] \,,
\label{eq:JacobianSN}
\end{align}
where the data-dependent singular value $\sigma(\Jac{\x})$ is computed via Eq.~\ref{eq:datadepspectralnorm}.

By optimality / stationarity\footnote{$\u = \Jac{\x} \v / ||\Jac{\x} \v||_2$. Compare with Eq.~\ref{eq:spectralnormdefinition} and the (2,2)-operator norm ($=$ spectral norm) in Eq.~\ref{eq:operatornorm}.} we also have that 
$\sigma(\Jac{\x}) = \u^\top \Jac{\x} \v 
= || \Jac{\x} \v ||_2$.
Totherther with linearization $\epsilon \Jac{\x}  \v \simeq f(\x + \epsilon \v) - f(\x)$ (which holds with equality if $\x + \epsilon \v \in X(\phi_\x)$), 
we can regularize learning also via
a sum-of-squares based spectral norm regularizer $\frac{\lambda}{2} || f(\x \!+\! \epsilon \v) \!-\! f(\x) ||_2^2$, 
where $\tilde\lambda = \lambda\epsilon^2$.
Both variants can readily be implemented in modern deep learning frameworks. 
We found the sum-of-squares based one to be slightly more numerically stable.

In terms of computational complexity, the data-dependent regularizer is equally expensive as projected gradient ascent based adversarial training, and both are a constant (number of power method iterations) times more expensive than the data-independent variant,
plus an overhead that depends on the batch size, which is mitigated in modern frameworks by parallelizing computations across a batch of data.

\subsection{Data-dependent Operator Norm Regularization}
\label{sec:data-dependentONR}
More generally, we can directly regularize the {data-dependent} $(p, q)$-{operator norm} of the Jacobian. 
To this end, let the input-space (domain) and output-space (co-domain) of the linear map $\Jac{\x}$ be equipped with the $\ell_p$ and $\ell_q$ norms respectively, defined as $|| \x ||_p := (\sum_{i=1}^n | x_i |^p)^{1/p}$ for $1 \leq p < \infty$ and $|| \x ||_\infty := \max_i | x_i |$ for $p=\infty$.
The \textit{data-dependent} $(p, q)$-\textit{operator norm} of $\Jac{\x}$ is defined as
\begin{align}
\label{eq:operatornorm}
|| \Jac{\x} ||_{p, q}  &:=\! %\max_{ \v : || \v ||_p \leq 1}  || \Jac{\x} \v ||_q  =\! 
\max_{ \v : || \v ||_p = 1}  || \Jac{\x} \v ||_q  
\end{align}
which is a data-dependent measure of the maximal amount of signal-gain that can be induced when propagating a norm-bounded input vector through the linearized network.
Table~\ref{tbl:opnorms} shows $(p, q)$-operator norms for typical values of $p$ (domain) and $q$ (co-domain).

For general $(p, q)$-norms, the maximizer $\v$ in Eq.~\ref{eq:operatornorm} can be computed via projected gradient ascent.
Let $\v_0$ be a random vector or an approximation to the maximizer.
The data-dependent $(p, q)$-operator norm of $\Jac{\x}$ can be computed iteratively via
\begin{equation}
\v_k = \Pi_{\{|| \cdot ||_p = 1\}}( \v_{k-1} + \alpha\, \nabla_{\v}  || \Jac{\x} \v_{k-1} ||_q )
\end{equation}
where $\Pi_{\{|| \cdot ||_p = 1\}}(\tilde \v) := \argmin_{\v^* : ||\v^*||_p = 1} || \v^* - \tilde\v ||_2$ is the orthogonal projection onto the $\ell_p$ unit sphere, 
and where $\alpha$ is a step-size or weighting factor, trading off the previous iterate $\v_{k-1}$ with the current gradient step $\nabla_{\v}  || \Jac{\x} \v_{k-1} ||_q$.

By the chain-rule, the computation of the gradient step 
\begin{align}
\label{eq:Jacobiangradientchainrule}
\hspace{-2mm}\nabla_\v || \Jac{\x} \v ||_q 
& = \Jac{\x}^\top \textnormal{sign}(\u) \odot | \u |^{q-1} / || \u ||_q^{q-1} \,\ , \quad \text{where }\ \u = \Jac{\x} \v \,,
\end{align}
can be decomposed into a forward and backward pass through the Jacobian $\Jac{\x}$, 
yielding the following \textit{projected gradient ascent based operator norm iteration method} 
\begin{equation}
\begin{aligned}
\label{eq:datadepoperatornorm}
\hspace{-2mm}\u_{k} &\leftarrow \textnormal{sign}(\tilde{\u}_k) \odot | \tilde{\u}_k |^{q-1} /\, || \tilde{\u}_k ||_q^{q-1}\,, \,\,\, \tilde{\u}_{k} \leftarrow \Jac{\x} \v_{k-1} \\
\hspace{-2mm}\v_k &\leftarrow \Pi_{\{|| \cdot ||_p = 1\}}(\v_{k-1} + \alpha\, \tilde{\v}_k)\, , \quad\quad\,\,\ \tilde{\v}_{k} \leftarrow \Jac{\x}^\top \u_{k} 
\end{aligned}
\end{equation}
where $\odot$\,, $\textnormal{sign}( \cdot )$ and $| \cdot |$ denote elementwise product, sign and absolute-value.
See Figure~\ref{fig:ATExplainer} for an illustration of the forward-backward passes through (the Jacobian $\Jac{\x}$ of) the network.

In particular, we have the following well-known special cases. For $q=2$, the forward pass equation is given by $\u_k \leftarrow \tilde\u_k / || \tilde\u_k ||_2$, while for $q=1$ it is given by $\u_k \leftarrow \textnormal{sign}(\tilde\u_k)$.
The $q=\infty$ limit 
on the other hand is given by
$\lim_{q\to\infty} \textnormal{sign}(\tilde{\u}) \odot | \tilde{\u} |^{q-1} /\, || \tilde{\u} ||_q^{q-1} = |\mathcal{I}|^{-1} \textnormal{sign}(\tilde{\u}) \odot \mathbf{1}_{\mathcal{I}}$
with $\mathbf{1}_{\mathcal{I}} = \sum_{i \in \mathcal{I}} \mathbf{e}_i$, where $\mathcal{I} := \{ j \in [1,...,d] : |\tilde{u}_j| = || \tilde\u ||_\infty \}$ denotes the set of indices at which $\tilde\u$ attains its maximum norm and $\mathbf{e}_i$ is the $i$-th canonical unit vector. See Sec.~\ref{sec:gradientofpnorm} in the  Appendix for a derivation.

It is interesting to note that we can recover the power method update equations by taking the limit $\alpha \to \infty$ (which is well-defined since $\alpha$ is inside the projection) in the above iteration equations, 
which we consider to be an interesting result in its own right, see \textbf{Lemma~\ref{projection_lemma}} in the Appendix.
With this, we obtain the following \textit{power method limit} of the operator norm iteration equations 
\begin{equation}
\begin{aligned}
\label{eq:datadepoperatornorm_powermethodlimit}
\hspace{-2mm}\u_{k} &\leftarrow \textnormal{sign}(\tilde{\u}_k) \odot | \tilde{\u}_k |^{q-1} /\, || \tilde{\u}_k ||_q^{q-1}\ , \,\,\,\,\,\ \tilde{\u}_{k} \leftarrow \Jac{\x} \v_{k-1}  \\
\hspace{-2mm}\v_k &\leftarrow \textnormal{sign}(\tilde{\v}_k) \odot | \tilde{\v}_k |^{p^*-1} /\, || \tilde{\v}_k ||_{p^*}^{p^*-1} \, , \, \tilde{\v}_{k} \leftarrow \Jac{\x}^\top \u_{k} 
\end{aligned}
\end{equation}
The condition that $\alpha \to \infty$ means that in the update equation for $\v_k$  all the weight is placed on the current gradient direction $\tilde\v_k$ whereas no weight is put on the previous iterate $\v_{k-1}$.
See \cite{boyd1974power, higham1992estimating} for a convergence analysis of the power method.

We can turn this into a regularizer by learning the parameters $\theta$ via 
\begin{align}
\label{eq:operatornormregularization}
& \min \theta \rightarrow \E_{(\x,y )\sim \hat{P}}\Big[ \ell(y, f(\x)) +  \tilde\lambda \, || \Jac{\x} ||_{p, q} \Big] 
\end{align}
Note that we can also use the $q$-th power of the operator norm as a regularizer (with a prefactor of $1/q$). 
It is easy to see that this only affects the normalization of $\u_{k}$. 
%since $\nabla_\v \frac{1}{q} || \Jac{\x} \v ||_q^q = || \Jac{\x} \v ||_q^{q-1} \nabla_{\v} || \Jac{\x} \v ||_q$.

\subsection{Power Method Formulation of Adversarial Training}
\label{sec:PowerMethodAT}
Adversarial training \cite{goodfellow2014explaining, kurakin2016adversarial, madry2018towards} aims to improve the robustness of a model by training it against an adversary that perturbs each training example subject to a proximity constraint, e.g.\ in $\ell_p$-norm,
\begin{align}
\label{eq:ATobjective}
\hspace{-1mm}& \min \theta \rightarrow \E_{(\x,y )\sim \hat{P}}\Big[ \ell(y, f(\x)) + \lambda \hspace{-1mm}\max_{\x^* \in \mathcal{B}^p_{\epsilon}(\x)} \ell_{\rm adv}(y, f(\x^*)) \Big]
\end{align}
where $\ell_{\rm adv}(\cdot, \cdot)$ denotes the loss function used to find adversarial perturbations (not necessarily the same as the classification loss $\ell(\cdot, \cdot)$).

The adversarial example $\x^*$ is typically computed iteratively, e.g.\ via $\ell_p$-norm constrained projected gradient ascent
(cf.\ the PGD attack in \cite{papernot2017cleverhans} 
or Sec.~3 on PGD-based adversarial attacks in \cite{wong2019wasserstein})%\footnote{In principle we could take a different norm in the inner maximization than for the projection. Our results can easily be extended to this slightly more general case.}: 
\begin{align}
\x_0  \sim \mathcal{U}( \mathcal{B}^p_{\epsilon}(\x) ) \ , \,\ \x_{k} = \Pi_{\mathcal{B}^p_{\epsilon}(\x) } \Big(  \x_{k-1} + \alpha \hspace{-1mm}\argmax_{\v_k: || \v_k ||_p \leq 1} \v_k^\top \nabla_\x \ell_{\rm adv}(y, f(\x_{k-1})) \Big)
\end{align}
where $\Pi_{\mathcal{B}^p_{\epsilon}(\x) }(\tilde \x) := \argmin_{\x^* \in \mathcal{B}^p_{\epsilon}(\x)} || \x^* \!-\! \tilde\x ||_2$ is the orthogonal projection operator 
into the norm ball $\mathcal{B}^p_{\epsilon}(\x) := \{ \x^* : || \x^* - \x ||_p \leq \epsilon \}$, 
$\alpha$ is a step-size or weighting factor, trading off the previous iterate $\x_{k-1}$ with the current gradient step $\v_k$, and
$y$ is the true or predicted label. 
For targeted attacks the sign in front of $\alpha$ is flipped, so as to descend the loss function into the direction of the target label.

We can in fact derive the following explicit expression for the optimal perturbation to a linear function under an $\ell_p$-norm constraint, 
see \textbf{Lemma~\ref{optimalpert_lemma}} in the Appendix, 
\begin{align}
\label{eq:ATmaximizer}
\v^* = \argmax_{\v : || \v ||_p \leq 1} \v^\top  \z = {\textnormal{sign}(\z) \odot |\z|^{p^*-1}}/{|| \z||_{p^*}^{p^*-1}}
\end{align}
where $\odot$\,, $\textnormal{sign}( \cdot )$ and $| \cdot |$ denote elementwise product, sign and absolute-value,
$p^*$ is the H{\"o}lder conjugate of $p$, given by $1/p + 1/p^* = 1$,
and $\z$ is an arbitrary non-zero vector, 
e.g.\ $\z = \nabla_\x \ell_{\rm adv}(y, f(\x_{}))$.
The derivation can be found in Sec.~\ref{sec:optimallinearpert} in the Appendix.

As a result, $\ell_p$-norm constrained projected gradient ascent can be implemented as follows
\begin{align}
\x_{k} &= \Pi_{\mathcal{B}^p_{\epsilon}(\x) } \Big(  \x_{k-1} + \alpha \,  {\textnormal{sign}(\nabla_\x \ell_{\rm adv}) \odot |\nabla_\x \ell_{\rm adv}|^{{p^*}-1}}/{|| \nabla_\x \ell_{\rm adv}||_{p^*}^{{p^*}-1}} \Big) 
\end{align}
where ${p^*}$ is given by $1/p + 1/{p^*} = 1$ and $\nabla_\x \ell_{\rm adv}$ is short-hand notation for $\nabla_\x \ell_{\rm adv}(y, f(\x_{k-1}))$.

By the chain-rule, the computation of the gradient-step $\tilde{\v}_k := \nabla_\x \ell_{\rm adv}(y, f(\x_{k-1}))$ can be decomposed into a logit-gradient and a Jacobian vector product.
$\ell_p$-norm constrained projected gradient ascent can thus equivalently be written in the following 
\textit{power method like} 
forward-backward pass form (the normalization of $\tilde{\u}_k$ is optional and can be absorbed into the normalization of $\tilde\v_k$)
\vspace{1mm}
\begin{equation}
\begin{aligned}
\label{eq:powermethodformulationofAT}
\u_k &\leftarrow \tilde{\u}_k / ||\tilde{\u}_k ||_2\, ,  \,\,\  \tilde{\u}_k \leftarrow \left. \nabla_\z \ell_{\rm adv}(y, \z) \right|_{\z = f(\x_{k-1})}  \\
\v_k & \leftarrow { \textnormal{sign}(\tilde\v_k) \odot |\tilde\v_k|^{p^*-1} }/{ || \tilde\v_k||_{p^*}^{p^*-1} } ,  \,\,\ \tilde{\v}_k \leftarrow \left. \Jac{\x_{k-1}}^\top \u_k \right._{}  \\
 \x_{k} &\leftarrow \Pi_{\mathcal{B}^p_{\epsilon}(\x) } (  \x_{k-1} + \alpha \v_k ) 
\end{aligned}
\end{equation}
%Note that the logit-gradient $\left.\nabla_\z \ell_{\rm adv}(y, \z) \right|_{\z = f(\x_{k-1})} $ can be computed in a forward pass by expressing it directly in terms of the arguments of the adversarial loss. 
The adversarial loss function determines the logit-space direction $\u_k$ 
in the power method like formulation of adversarial training, 
while $\tilde{\v}_k$ resp.\ $\v_k$ gives the unconstrained resp.\ norm-constrained direction in input space that corresponds to the steepest ascent of the linearized network along $\u_k$.

The corresponding forward-backward pass equations for an $\ell_q$-norm loss on the logits of the clean and perturbed input $\ell_{\rm adv}( f(\x), f(\x^*)) = || f(\x) - f(\x^*) ||_q$ are shown in Eq.~\ref{eq:powermethodformulationofAT_ellq} in the Appendix.

Comparing Eq.~\ref{eq:powermethodformulationofAT} with Eq.~\ref{eq:datadepoperatornorm_powermethodlimit},
we can see that {\it adversarial training is a form of data-dependent operator norm regularization}.
The following theorem states the precise conditions under which they are mathematically equivalent.
The correspondence is proven for $\ell_q$-norm adversarial losses~\cite{sabour2015adversarial, kannan2018adversarial}.
See Sec.~\ref{sec:effectofadversarialloss} in the Appendix for a discussion of the softmax cross-entropy loss.
The proof can be found in Sec.~\ref{sec:proof} in the Appendix.

\begin{tcolorbox}
\begin{theorem}
For $\epsilon$ small enough such that $\mathcal{B}^p_{\epsilon}(\x) \subset X(\phi_\x)$ and in the limit $\alpha \to \infty$, 
$\ell_p$-norm constrained projected gradient ascent based adversarial training
with an $\ell_q$-norm loss on the logits of the clean and perturbed input $\ell_{\rm adv}( f(\x), f(\x^*)) = || f(\x) - f(\x^*) ||_q$, 
with $p,q \in \{1,2,\infty\}$,
is equivalent to the power method limit of data-dependent (p, q)-operator norm regularization of the Jacobian $\Jac{\x}$ of the network. 
\end{theorem}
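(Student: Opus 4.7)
The plan is to directly compare the adversarial training iteration in Eq.~\ref{eq:powermethodformulationofAT} with the power method limit of data-dependent operator norm regularization in Eq.~\ref{eq:datadepoperatornorm_powermethodlimit}, line by line, under the two stated assumptions $\mathcal{B}^p_\epsilon(\x)\subset X(\phi_\x)$ and $\alpha\to\infty$, and verify by induction on $k$ that the two schemes produce the same sequence of unit $\ell_p$-norm directions $\v_k$. Once that is established, the implicit regularizer optimized inside the linear cell by adversarial training is $\epsilon \cdot ||\Jac{\x}||_{p,q}$ (up to the overall $\epsilon$ scaling), which is precisely the data-dependent $(p,q)$-operator norm appearing in Eq.~\ref{eq:operatornormregularization}.

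First I would exploit the local linearization hypothesis: since $\mathcal{B}^p_\epsilon(\x)\subset X(\phi_\x)$, every PGD iterate $\x_{k-1}$ lies in the same linear region as $\x$, so the activation pattern (and hence the Jacobian) is constant, $\Jac{\x_{k-1}}=\Jac{\x}$, and the adversarial loss collapses to the exactly affine expression $\ell_{\rm adv}(f(\x),f(\x_{k-1})) = || \Jac{\x}(\x_{k-1}-\x) ||_q$. Writing the current perturbation as $\x_{k-1}-\x = \epsilon_{k-1}\v_{k-1}$ with $||\v_{k-1}||_p = 1$ places the PGD iteration entirely within the linear regime in which the operator norm power method is defined.

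Next I would explicitly evaluate the logit-space gradient $\tilde\u_k = \nabla_\z || f(\x) - \z ||_q |_{\z = f(\x_{k-1})}$ using the standard $\ell_q$-norm derivative formula (the $q=\infty$ limit being handled by Sec.~\ref{sec:gradientofpnorm}); substituting the linearization $f(\x_{k-1}) - f(\x) = \epsilon_{k-1}\Jac{\x}\v_{k-1}$, the scalar $\epsilon_{k-1}^{q-1}$ cancels between numerator and the $||\cdot||_q^{q-1}$ normalizer, leaving $\tilde\u_k \propto \textnormal{sign}(\Jac{\x}\v_{k-1}) \odot | \Jac{\x}\v_{k-1} |^{q-1} / || \Jac{\x}\v_{k-1} ||_q^{q-1}$, which matches (up to the positive scalar that, per the remark following Eq.~\ref{eq:powermethodformulationofAT}, may be absorbed into the subsequent $\v_k$ normalization) the $\u_k$ update of Eq.~\ref{eq:datadepoperatornorm_powermethodlimit} with $\tilde\u_k = \Jac{\x}\v_{k-1}$. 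The backward pass $\tilde\v_k \leftarrow \Jac{\x_{k-1}}^\top \u_k = \Jac{\x}^\top\u_k$ and the Hölder-dual normalization $\v_k \leftarrow \textnormal{sign}(\tilde\v_k)\odot|\tilde\v_k|^{p^*-1}/||\tilde\v_k||_{p^*}^{p^*-1}$ then coincide with the operator norm iteration verbatim.

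The main obstacle, and the only place where $\alpha\to\infty$ really enters, is showing that the input-space projection $\x_k = \Pi_{\mathcal{B}^p_\epsilon(\x)}(\x_{k-1}+\alpha\v_k)$ converges to $\x + \epsilon\v_k$, so that the next PGD iterate plays exactly the role of $\epsilon\v_k$ in the operator-norm power method and closes the induction. I would obtain this by invoking Lemma~\ref{projection_lemma} applied in input space, using the fact that $\v_k$ is by construction a Hölder-dual maximizer of $\langle\v,\tilde\v_k\rangle$ subject to $||\v||_p\le 1$ and hence an extreme direction of the unit $\ell_p$-ball, so that $\Pi_{\mathcal{B}^p_\epsilon(\x)}$ of the far-away point $\x_{k-1}+\alpha\v_k$ lands on $\x+\epsilon\v_k$ as $\alpha\to\infty$. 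A brief check across the three relevant cases makes this concrete: for $p=2$ it is radial projection of a vector of norm $\Theta(\alpha)$; for $p=\infty$, $\v_k$ already has entries in $[-1,1]$ and componentwise clipping at $\pm\epsilon$ yields $\epsilon\v_k$; for $p=1$ (so $p^*=\infty$), $\v_k$ is supported on the argmax coordinates of $\tilde\v_k$ which are vertices of the $\ell_1$-ball where projection is trivial. With this identification in hand, the two iterations produce identical sequences $\{\v_k\}$, completing the equivalence.
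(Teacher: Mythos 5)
Your proposal is correct and follows essentially the same route as the paper's own proof in Sec.~\ref{sec:proof}: you use the constant-Jacobian property of $X(\phi_\x)$ to collapse the forward pass to $\tilde\u_k \propto \Jac{\x}\v_{k-1}$ (with the scale factor absorbed by the degree-zero homogeneity of the normalization), identify the backward passes verbatim, and invoke Lemma~\ref{projection_lemma} in the $\alpha\to\infty$ limit---including the fixed-point property of the H\"older-dual map for $p\in\{1,2,\infty\}$, which is exactly where that restriction enters in the paper---before matching the objectives as $\epsilon\,||\Jac{\x}||_{p,q}$. Your per-case check of the projection ($p=2$ radial, $p=\infty$ clipping, $p=1$ argmax support) is the same verification the paper carries out inside the Projection Lemma itself, with the single minor caveat that for $p=1$ with a non-unique argmax the limit point is a face rather than a vertex of the $\ell_1$-ball, a degenerate case the lemma's general fixed-point identity already covers.
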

\end{tcolorbox}

\begin{table}[t]
\centering
\caption{Computing $(p, q)$-operator norms for typical values of $p$ (domain) and $q$ (co-domain). See Sec.~4.3.1 in \cite{tropp2004topics}.
We prove Theorem~1 for all the entries in this table.
In our setting, ``columns'' of $\Jac{\x}$ correspond to paths through the network originating in a specific input neuron, whereas ``rows'' correspond to paths ending in a specific output neuron.}\label{tbl:opnorms}
\vspace{-2mm}
\scriptsize
\vspace{2mm}
\begin{tabular}{cc|cccc}
\toprule
& & \multicolumn{3}{c}{Co-domain} \\[1mm]
\multirow{4}{*}{\begin{turn}{90}\hspace{-1.4cm} Domain \end{turn}}\hspace{-2mm} & &  $\ell_1$   &   $\ell_2$  &  $\ell_\infty$  \\[1mm]
\cmidrule{1-5}
& \multirow{2}{*}{$\ell_1$}     &  max $\ell _{1}$    &  max $\ell _{2}$    &  max $\ell _{\infty }$   \\
& & norm of a column & norm of a column & norm of a column \\[2mm]
& \multirow{2}{*}{$\ell_2$}     & \multirow{2}{*}{NP-hard}   &  \multirow{2}{*}{max singular value}   &  max $\ell _{2}$  \\
& & & &  norm of a row \\[2mm]
& \multirow{2}{*}{$\ell_\infty$}    &  \multirow{2}{*}{NP-hard}   &  \multirow{2}{*}{NP-hard}   &  max $\ell _{1}$ \\
& & & & norm of a row \\
\bottomrule
\end{tabular}
\end{table}

%The conditions on $\epsilon$ and $\alpha$ can be considered specifics of the respective iteration method. 
%The condition that $\epsilon$ be small enough such that $\mathcal{B}^p_{\epsilon}(\x)$ is contained in the ReLU cell around $\x$ ensures that  $\Jac{\x^*} = \Jac{\x}$ for all $\x^* \in \mathcal{B}^p_{\epsilon}(\x)$. 
%The power-method limit $\alpha \to \infty$ means that in the update equations all the weight (no weight) is placed on the current gradient direction (previous iterates).

In practice, the correspondence holds \emph{approximately} (to a very good degree) in a region much larger than $X(\phi_\x)$, 
namely as long as the Jacobian of the network remains approximately constant in the uncertainty ball under consideration, see Sec.~\ref{sec:rangeofreg}, specifically Figure~\ref{fig:Linearity} (left), as well as Sec.~\ref{sec:actpatterns}, specifically Figure~\ref{fig:ReluMasks} and Figure~\ref{fig:furtherresultsactivationpatterns}.

\textbf{In summary}, our Theorem confirms that a network’s sensitivity to adversarial examples is characterized through its spectral properties: it is the dominant singular vector (resp.\ the maximizer $\mathbf{v}^*$ in Lemma~\ref{optimalpert_lemma}) corresponding to the largest singular value (resp.\ the $(p, q)$-operator norm) that determines the optimal adversarial perturbation and hence the sensitivity of the model to adversarial examples.

Our results also explain why input gradient regularization and fast gradient method based adversarial training do not sufficiently protect against iterative adversarial attacks, 
namely because the input gradient, resp.\ a single power method iteration, do not yield a sufficiently good approximation for the dominant singular vector in general.
Similarly, we do not expect Frobenius norm ($=$ sum of all singular values) regularization to work as well as data-dependent spectral norm ($=$ largest s.v.) regularization in robustifying against iterative adversarial attacks.
More details in Sec.~\ref{sec:inputgradientregularization}\,\,\&\,\,\ref{sec:frobeniusnormregularization}.

% !TEX root =  00_main.tex

\section{Experimental Results}
\label{sec:experiments}

\subsection{Dataset, Architecture \& Training Methods} 
\label{sec:expsetup}
\vspace{-1mm}
We trained Convolutional Neural Networks (CNNs) with batch normalization on the CIFAR10 data set \cite{krizhevsky2009learning}.
We use a 7-layer CNN as our default platform, since it has good test set accuracy at acceptable computational requirements. For the robustness experiments, we also train a Wide Residual Network (WRN-28-10) \cite{zagoruyko2016wide}. We used an estimated $6$k TitanX GPU hours in total for all our experiments.
Our code is available at \url{https://github.com/yk/neurips20_public}.

We train each classifier with a number of different training methods: 
(i)~`Standard':\ standard empirical risk minimization with a softmax cross-entropy loss, 
(ii)~`Adversarial':\ $\ell_2$- / $\ell_\infty$-norm constrained projected gradient ascent (PGA) based adversarial training,
(iii)~`global SNR':\ global spectral norm regularization {\`a} la Yoshida \& Miyato~\cite{yoshida2017spectral}, 
and (iv) `d.d.\ SNR / ONR':\ data-dependent spectral / operator norm regularization.

As a default attack strategy we use an $\ell_2$- / $\ell_\infty$-norm constrained PGA white-box attack with 10 attack iterations. %($\ell_\infty$-norm results in the Appendix). 
We verified that all our conclusions also hold for larger numbers of attack iterations, however, due to computational constraints we limit to 10.
The attack strength~$\epsilon$ used for training (indicated by a vertical dashed line in the Figures below) was chosen to be the smallest value such that almost all adversarially perturbed inputs to the standard model are successfully misclassified.

The regularization constants were chosen such that the regularized models achieve the same test set accuracy on clean examples as the adversarially trained model does.
Global SNR is implemented with one spectral norm update per training step, as recommended by the authors. 
We provide additional experiments for global SNR with 10 update iterations in Sec.~\ref{sec:furtherresultsglobalsnr} in the Appendix. 
As shown, the 10 iterations make no difference, in line with the regularizer's decoupling from the empirical loss.

Table~\ref{tbl:hypers} in the Appendix summarizes the test set accuracies for the training methods we considered.
Additional experimental results and further details regarding the experimental setup can be found in Secs.~\ref{sec:experimentalsetup} and following in the Appendix.

Shaded areas in the plots below denote standard errors w.r.t.\ the number of test set samples over which the experiment was repeated.

%Additional results against $\ell_\infty$-norm constrained PGA attacks are provided in Sec.~\ref{sec:furtherresults} in the Appendix.
%The conclusions remain the same for all the experiments we conducted.

\vspace{-1mm}
\subsection{Spectral Properties}
\label{sec:spectralproperties}
\vspace{-1mm}

\begin{figure}[t!]
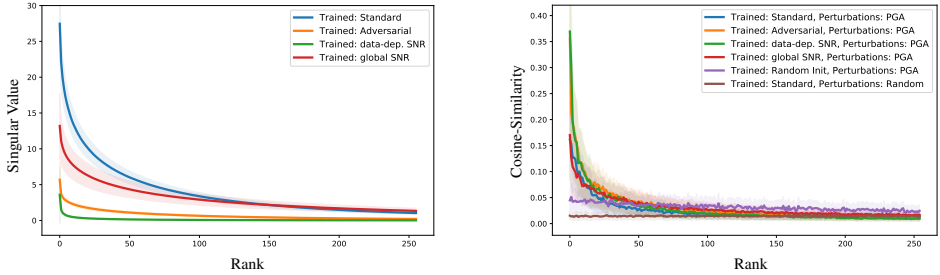

\vspace{-3mm}
\centering
\begin{tabular}{@{}l@{\hspace{1pt}}c@{\hspace{25pt}}l@{\hspace{1pt}}c@{}}
\begin{turn}{90}\hspace{26pt} {\tiny Singular Value} \end{turn} & \adjincludegraphics[width=0.4\linewidth, trim={{0.037\width} {0.07\height} {0.0\width} {0.0\height}},clip]{plots/spectrum} &
\begin{turn}{90}\hspace{24pt} {\tiny Cosine-Similarity} \end{turn} & \adjincludegraphics[width=0.4\linewidth, trim={{0.037\width} {0.07\height} {0.0\width} {0.0\height}},clip]{plots/singular_vectors}  \\[-1mm]
& \quad{\tiny Rank} & & \quad{\tiny Rank} 
\end{tabular}
\vspace{-2mm}
\caption{(Left) Singular value spectrum of the Jacobian $\J_{\phi^{L-1}}(\x)$ for networks $f=\W^L\phi^{L-1}$ trained with different training methods. 
(Right) Alignment of adversarial perturbations with singular vectors $\v_r$ of the Jacobian $\J_{\phi^{L-1}}(\x)$, as a function of the rank $r$ of the singular vector. 
For comparison we also show the cosine-similarity with the singular vectors of a random network. 
%Curves were aggregated over $200$ samples from the test set.
We can see that (i) adversarial training and data-dependent spectral norm regularization significantly dampen the singular values, while global spectral norm regularization has almost no effect compared to standard training, and (ii)~adversarial perturbations are strongly aligned with dominant singular vectors.}
\label{fig:Spectrum}
\end{figure}

\textbf{Effect of training method on singular value spectrum.}
We compute the singular value spectrum of the Jacobian $\J_{\phi^{L-1}}(\x)$ for networks $f=\W\phi^{L-1}$ trained with different training methods and evaluated at a number of different test examples. %($200$ except if stated otherwise).
Since we are interested in computing the full singular value spectrum, and not just the dominant singular value / vectors as during training, 
using the power method would be too impractical, as it gives us access to only one (the dominant) singular value-vector pair at a time.
Instead, we first extract the Jacobian (which is \textit{per se} defined as a computational graph in modern deep learning frameworks) as an input-dim$\times$output-dim matrix 
and then use available matrix factorization routines to compute the full SVD of the extracted matrix.
For each training method, the procedure is repeated for $200$ randomly chosen clean and corresponding adversarially perturbed test examples.
Further details regarding the Jacobian extraction can be found in Sec.~\ref{sec:extractingJacobian} in the Appendix. 
The results are shown in Figure~\ref{fig:Spectrum} (left). We can see that adversarial training and data-dependent spectral norm regularization significantly dampen the singular values, while global spectral norm regularization has almost no effect compared to standard training.

\textbf{Alignment of adversarial perturbations with singular vectors.}
We compute the cosine-similarity of adversarial perturbations with singular vectors $\v_r$ of the Jacobian $\J_{\phi^{L-1}}(\x)$, 
extracted at a number of test set examples, as a function of the rank of the singular vectors returned by the SVD decomposition. 
For comparison we also show the cosine-similarity with the singular vectors of a random network. 
The results are shown in Figure~\ref{fig:Spectrum} (right).
We can see that for all training methods (except the random network) adversarial perturbations are strongly aligned with the dominant singular vectors
while the alignment decreases with increasing rank. 

Interestingly, this strong alignment with dominant singular vectors also confirms why input gradient regularization and fast gradient method (FGM) based adversarial training 
do not sufficiently protect against iterative adversarial attacks,
namely because the input gradient, resp.\ a single power method iteration, do not yield a sufficiently good approximation for the dominant singular vector in general. See Sec.~\ref{sec:inputgradientregularization} in the Appendix for a more technical explanation.

\subsection{Adversarial Robustness}
\vspace{-1mm}

\textbf{Adversarial classification accuracy.}
A plot of the classification accuracy on adversarially perturbed test examples, as a function of the perturbation strength $\epsilon$, is shown in Figure~\ref{fig:Accuracy}. 
We can see that data-dependent spectral norm regularized models are equally robust to adversarial examples as adversarially trained models and both are significantly more robust than a normally trained one, 
while global spectral norm regularization does not seem to robustify the model substantially. 
This is in line with our earlier observation that adversarial perturbations tend to align with dominant singular vectors
and that they are dampened by adversarial training and data-dependent spectral norm regularization.

\textbf{Interpolating between AT and d.d. SNR.}
We have also conducted an experiment where we train several networks from scratch each with an objective function that convexly combines adversarial training with data-dependent spectral norm regularization in a way that allows us to interpolate between (i) the fraction of adversarial examples relative to clean examples used during adversarial training controlled by $\lambda$ in Eq.~\ref{eq:ATobjective} and (ii) the regularization parameter $\tilde\lambda$ in Eq.~\ref{eq:operatornormregularization}. This allows us to continuously trade-off the contribution of AT with that of d.d. SNR in the empirical risk minimization. 
The results, shown in Figure~\ref{fig:interpolatingATddSNR}, again confirm that the two training methods are equivalent.

\begin{figure}[t]
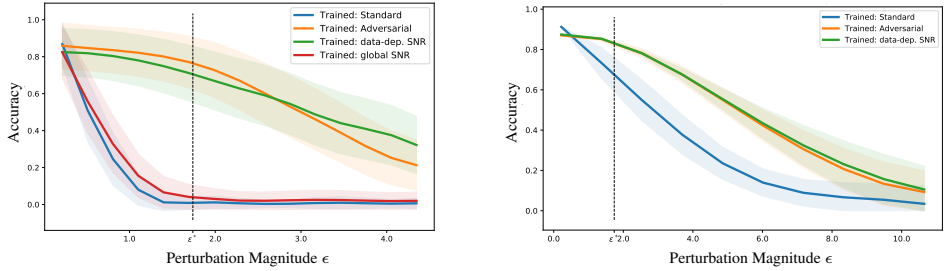

\vspace{-5mm}
\centering
\begin{tabular}{@{}l@{\hspace{1pt}}c@{\hspace{23pt}}l@{\hspace{1pt}}c@{}}
\begin{turn}{90}\hspace{35pt} {\tiny Accuracy} \end{turn} & \adjincludegraphics[width=0.4\linewidth, trim={{0.037\width} {0.085\height} {0.0\width} {0.0\height}},clip]{plots/adversarial_accuracy} &
\begin{turn}{90}\hspace{35pt} {\tiny Accuracy} \end{turn} & \adjincludegraphics[width=0.41\linewidth, trim={{0.037\width} {0.085\height} {0.0\width} {0.065\height}},clip]{plots5/sota.pdf} \\[-1mm]
& \quad{\tiny Perturbation Magnitude $\epsilon$} & & \quad{\tiny Perturbation Magnitude $\epsilon$}
\end{tabular}
\vspace{-2mm}
\caption{
Classification accuracy as a function of perturbation strength $\epsilon$. (Left) 7-layer CNN
(Right) WideResNet WRN-28-10. 
The dashed line indicates the $\epsilon$ used during training.
Curves were aggregated over $2000$ (left) resp. all (right) samples from the test set.
We can see that adversarially trained and data-dependent spectral norm regularized models are equally robust to adversarial attacks.
}
\label{fig:Accuracy}
\end{figure}

\subsection{Local Linearity} 
\label{sec:rangeofreg}
\vspace{-1mm}

\textbf{Validity of linear approximation.}
To determine the range in which the locally linear approximation is valid, we measure the deviation from linearity 
$|| \phi^{L-1}(\x + \z) - ( \phi^{L-1}(\x) + \J_{\phi^{L-1}}(\x) \z ) ||_2$ as the distance $||\z||_2$ is increased in random and adversarial directions $\z$, 
with adversarial perturbations serving as a proxy for the direction in which the linear approximation holds the least.
This allows us to investigate how good the linear approximation for different training methods is, as an increasing number of activation boundaries are crossed with increasing perturbation radius.

\begin{figure}[t]
\centering
\begin{tabular}{@{}l@{\hspace{1pt}}c@{\hspace{25pt}}l@{\hspace{1pt}}c@{}}
\begin{turn}{90}\hspace{14pt}  {\tiny Deviation from Linearity} \end{turn} & \adjincludegraphics[width=0.4\linewidth, trim={{0.042\width} {0.085\height} {0.0\width} {0.0\height}},clip]{plots/linear} &
\begin{turn}{90}\hspace{20pt} {\tiny Top Singular Value} \end{turn} & \adjincludegraphics[width=0.4\linewidth, trim={{0.042\width} {0.085\height} {0.0\width} {0.0\height}},clip]{plots/top_singular_value} \\[-1mm]
& \quad{\tiny Distance from $\x$} & & \quad{\tiny Distance from $\x$} 
\end{tabular}
\vspace{-2mm}
\caption{(Left) Deviation from linearity $|| \phi^{L-1}(\x + \z) - ( \phi^{L-1}(\x) + \J_{\phi^{L-1}}(\x) \z ) ||_2$ as a function of the distance $||\z||_2$ from $\x$ for random and adversarial perturbations $\z$. 
(Right) Largest singular value of the Jacobian $\J_{\phi^{L-1}}(\x\!+\!\z)$ as a function of the magnitude $||\z||_2$. 
%The dashed vertical line indicates the $\epsilon$ used during adversarial training. Curves were aggregated over $200$ samples from the test set.
We can see that adversarially trained and data-dependent spectral norm regularized models are significantly more linear around data points than the normally trained one
and that the Jacobian $\Jac{\x}$ is a good approximation for the AT and d.-d.\ SNR regularized classifier in the entire $\epsilon^*$-ball around~$\x$.} 
\label{fig:Linearity}
\end{figure}

The results are shown in Figure~\ref{fig:Linearity} (left).
We can see that adversarially trained and data-dependent spectral norm regularized models are significantly more linear than the normally trained one and that 
the Jacobian $\Jac{\x}$ is a good approximation for the AT and d.-d.\ SNR regularized classifier in the entire $\epsilon^*$-ball around~$\x$, 
since the models remain flat even in the adversarial direction for perturbation magnitudes up to the order of the $\epsilon^*$ used during adversarial training (dashed vertical line).
Moreover, since our Theorem is applicable as long as the Jacobian of the network remains (approximately) constant in the uncertainty ball under consideration, this also means that the correspondence between AT and d.d. SNR holds up to the size of the $\epsilon^*$-ball commonly used in AT practice.

\textbf{Largest singular value over distance.}
Figure~\ref{fig:Linearity} (right) shows the largest singular value of the linear operator $\J_{\phi^{L-1}}(\x\!+\!\z)$ as the distance $||\z||_2$ from $\x$ is increased, both along random and adversarial directions $\z$.
We can see that the naturally trained network develops large dominant singular values around the data point,
while the adversarially trained and data-dependent spectral norm regularized models manage to keep the dominant singular value low in the vicinity of $\x$.

\vspace{-2mm}
\subsection{Activation patterns}
\label{sec:actpatterns}
\vspace{-1mm}
An important property of \textit{data-dependent} regularization 
is that it primarily acts on the data manifold whereas it should have comparatively little effect on irrelevant parts of the input space, see Sec.~\ref{sec:globalvslocalregularization}.
We test this hypothesis by comparing activation patterns $\phi_{(\cdot)} \in \{0,1 \}^m$ of perturbed input samples. 
We measure the fraction of shared activations $m^{-1}(\phi_{(\cdot)} \cap \phi_{(\cdot) \!+\! \mathbf{z}})$, 
where $(\cdot)$ is either a data point $\x$ sampled from the test set, shown in Figure~\ref{fig:ReluMasks} (left), 
or a data point $\mathbf{n}$ sampled uniformly from the input domain (a.s.\ not on the data manifold), shown in Figure~\ref{fig:ReluMasks} (right), 
and where $\mathbf{z}$ is a random uniform noise vector of magnitude~$\epsilon$.
From these curves, we can estimate the average size of the ReLU cells making up the activation pattern $\phi_{(\cdot)}$.
%The further to the right the line breaks away from $1.0$, the longer the activation pattern stays constant, the larger the ReLU cells are around the input point.
We can see that both d.d. SNR and AT significantly increase the size of the ReLU cells around data (in both random and adv.\ directions), thus improving the stability of activation patterns against adversarial examples, yet they have no effect away from the data manifold.
See also Figure~\ref{fig:furtherresultsactivationpatterns} in the Appendix.

\begin{figure}[t!]
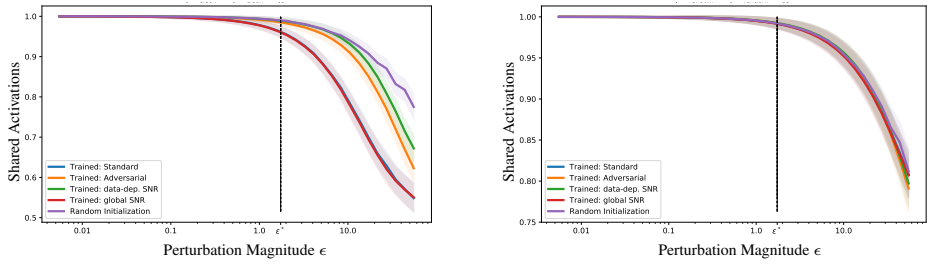

\centering
\begin{tabular}{@{}l@{\hspace{1pt}}c@{\hspace{23pt}}l@{\hspace{1pt}}c@{}}
\begin{turn}{90}\hspace{17pt} {\tiny Shared Activations} \end{turn} & \adjincludegraphics[width=0.4\linewidth, trim={{0.037\width} {0.085\height} {0.0\width} {0.068\height}},clip]{plots5/relu_masks_rand.pdf} &
\begin{turn}{90}\hspace{17pt} {\tiny Shared Activations} \end{turn} & \adjincludegraphics[width=0.4\linewidth, trim={{0.037\width} {0.085\height} {0.0\width} {0.068\height}},clip]{plots5/relu_masks_rinit.pdf} \\[-1mm]
& \quad{\tiny Perturbation Magnitude $\epsilon$} & & \quad{\tiny Perturbation Magnitude $\epsilon$}
\end{tabular}
\vspace{-2mm}
\caption{
Fraction of shared activations as a function of noise magnitude $\epsilon$ 
between activation patterns $\phi_{(\cdot)} \text{ and } \phi_{(\cdot) \!+\! \z}$, 
where $(\cdot)$ is either (left) a data point $\x$ sampled from the test set,
or (right) a data point $\mathbf{n}$ sampled uniformly from the input domain (a.s.\ not on the data manifold) 
and $\z$ is uniform noise of magnitude $\epsilon$.
%For comparison, Figure~\ref{fig:furtherresultsactivationpatterns} in the Appendix shows the fraction of shared activations under adversarial perturbations.
We can see that both d.d. SNR and AT significantly increase the size of the ReLU cells around data, yet have no effect away from the data manifold.}
\label{fig:ReluMasks}
\end{figure}

% !TEX root =  00_main.tex

\vspace{-1mm}
\section{Conclusion}
\vspace{-2mm}
We established a theoretical link between adversarial training and operator norm regularization for deep neural networks.
Specifically, we derive the precise conditions under which $\ell_p$-norm constrained projected gradient ascent based adversarial training with an $\ell_q$-norm loss on the logits of clean and perturbed inputs is equivalent to data-dependent (p, q) operator norm regularization.
This fundamental connection confirms the long-standing argument that a network's sensitivity to adversarial examples is tied to its spectral properties.
We also conducted extensive empirical evaluations confirming the theoretically predicted effect of adversarial training and data-dependent operator norm regularization on the training of robust classifiers:
(i) adversarial perturbations align with dominant singular vectors,
(ii) adversarial training and data-dependent spectral norm regularization dampen the singular values, and
(iii) both training methods give rise to models that are significantly more linear around data than normally trained ones.

\newpage
\section*{Broader Impact}

The existence of adversarial examples, i.e.\ small perturbations of the input signal, often imperceptible to humans, that are sufficient to induce large changes in the model output, 
poses a real danger when deep neural networks are deployed in the real world,
as potentially safety-critical machine learning systems become vulnerable to attacks that can alter the system's behaviour in malicious ways.
Understanding the origin of this vulnerability and / or acquiring an understanding of how to  robustify deep neural networks against such attacks thus becomes crucial for a safe and responsible deployment of machine learning systems.

{\bf Who may benefit from this research}

Our work contributes to understanding the origin of this vulnerability in that it sheds new light onto the attack algorithms used to find adversarial examples.
It also contributes to building robust machine learning systems in that it allows practitioners to make more informed and well-founded decisions when training robust models.

{\bf Who may be put at a disadvantage from this research}

Our work, like any theoretical work on adversarial examples, may increase the level of understanding of a malevolent person intending to mount adversarial attacks against deployed machine learning systems which may ultimately put the end-users of these systems at risk. 
We would like to note, however, that the attack algorithms we analyze in our work already exist and that we believe that the knowledge gained from our work is more beneficial to making models more robust than it could possibly be used to designing stronger adversarial attacks.

{\bf Consequences of failure of the system}

Our work does not by itself constitute a system of any kind, other than providing a rigorous mathematical framework within which to better understand adversarial robustness.

\begin{ack}
 We would like to thank Michael Tschannen, Sebastian Nowozin and Antonio Orvieto for insightful discussions and helpful comments. All authors are directly funded by ETH Z\"urich.
\end{ack}

\bibliography{bibliography}

\begin{thebibliography}{10}

\bibitem{bartlett2017spectrally}
Peter~L Bartlett, Dylan~J Foster, and Matus~J Telgarsky.
\newblock Spectrally-normalized margin bounds for neural networks.
\newblock In {\em Advances in Neural Information Processing Systems}, pages
  6241--6250, 2017.

\bibitem{bertsimas2018characterization}
Dimitris Bertsimas and Martin~S Copenhaver.
\newblock Characterization of the equivalence of robustification and
  regularization in linear and matrix regression.
\newblock {\em European Journal of Operational Research}, 270(3):931--942,
  2018.

\bibitem{bietti2019kernel}
Alberto Bietti, Gr{\'e}goire Mialon, Dexiong Chen, and Julien Mairal.
\newblock A kernel perspective for regularizing deep neural networks.
\newblock In {\em International Conference on Machine Learning}, pages
  664--674. PMLR, 2019.

\bibitem{biggio2013evasion}
Battista Biggio, Igino Corona, Davide Maiorca, Blaine Nelson, Nedim Srndic,
  Pavel Laskov, Giorgio Giacinto, and Fabio Roli.
\newblock Evasion attacks against machine learning at test time.
\newblock In {\em Joint European conference on machine learning and knowledge
  discovery in databases}, pages 387--402. Springer, 2013.

\bibitem{boyd1974power}
David~W Boyd.
\newblock The power method for lp norms.
\newblock {\em Linear Algebra and its Applications}, 9:95--101, 1974.

\bibitem{bubeck2019adversarial}
S{\'e}bastien Bubeck, Yin~Tat Lee, Eric Price, and Ilya Razenshteyn.
\newblock Adversarial examples from computational constraints.
\newblock In {\em International Conference on Machine Learning}, pages
  831--840, 2019.

\bibitem{carlini2017adversarial}
Nicholas Carlini and David Wagner.
\newblock Adversarial examples are not easily detected: Bypassing ten detection
  methods.
\newblock In {\em Proceedings of the 10th ACM Workshop on Artificial
  Intelligence and Security}, pages 3--14. ACM, 2017.

\bibitem{cisse2017parseval}
Moustapha Cisse, Piotr Bojanowski, Edouard Grave, Yann Dauphin, and Nicolas
  Usunier.
\newblock Parseval networks: Improving robustness to adversarial examples.
\newblock In {\em International Conference on Machine Learning}, pages
  854--863, 2017.

\bibitem{el1997robust}
Laurent El~Ghaoui and Herv{\'e} Lebret.
\newblock Robust solutions to least-squares problems with uncertain data.
\newblock {\em SIAM Journal on matrix analysis and applications},
  18(4):1035--1064, 1997.

\bibitem{farnia2018generalizable}
Farzan Farnia, Jesse~M Zhang, and David Tse.
\newblock Generalizable adversarial training via spectral normalization.
\newblock {\em arXiv preprint arXiv:1811.07457}, 2018.

\bibitem{fawzi2018adversarial}
Alhussein Fawzi, Hamza Fawzi, and Omar Fawzi.
\newblock Adversarial vulnerability for any classifier.
\newblock In {\em Advances in neural information processing systems}, pages
  1178--1187, 2018.

\bibitem{fawzi2018analysis}
Alhussein Fawzi, Omar Fawzi, and Pascal Frossard.
\newblock Analysis of classifiers’ robustness to adversarial perturbations.
\newblock {\em Springer Machine Learning}, 107(3):481--508, 2018.

\bibitem{fawzi2016robustness}
Alhussein Fawzi, Seyed-Mohsen Moosavi-Dezfooli, and Pascal Frossard.
\newblock Robustness of classifiers: from adversarial to random noise.
\newblock In {\em Advances in Neural Information Processing Systems}, pages
  1632--1640, 2016.

\bibitem{feinman2017detecting}
Reuben Feinman, Ryan~R Curtin, Saurabh Shintre, and Andrew~B Gardner.
\newblock Detecting adversarial samples from artifacts.
\newblock {\em arXiv preprint arXiv:1703.00410}, 2017.

\bibitem{gao2016distributionally}
Rui Gao and Anton~J Kleywegt.
\newblock Distributionally robust stochastic optimization with wasserstein
  distance.
\newblock {\em arXiv preprint arXiv:1604.02199}, 2016.

\bibitem{gilmer2018adversarial}
Justin Gilmer, Luke Metz, Fartash Faghri, Samuel~S Schoenholz, Maithra Raghu,
  Martin Wattenberg, and Ian Goodfellow.
\newblock Adversarial spheres.
\newblock {\em arXiv preprint arXiv:1801.02774}, 2018.

\bibitem{goodfellow2014explaining}
Ian~J Goodfellow, Jonathon Shlens, and Christian Szegedy.
\newblock Explaining and harnessing adversarial examples.
\newblock {\em arXiv preprint arXiv:1412.6572}, 2014.

\bibitem{grosse2017statistical}
Kathrin Grosse, Praveen Manoharan, Nicolas Papernot, Michael Backes, and
  Patrick McDaniel.
\newblock On the (statistical) detection of adversarial examples.
\newblock {\em arXiv preprint arXiv:1702.06280}, 2017.

\bibitem{gu2014towards}
Shixiang Gu and Luca Rigazio.
\newblock Towards deep neural network architectures robust to adversarial
  examples.
\newblock {\em arXiv preprint arXiv:1412.5068}, 2014.

\bibitem{hein2017formal}
Matthias Hein and Maksym Andriushchenko.
\newblock Formal guarantees on the robustness of a classifier against
  adversarial manipulation.
\newblock In {\em Advances in Neural Information Processing Systems}, pages
  2266--2276, 2017.

\bibitem{higham1992estimating}
Nicholas~J Higham.
\newblock Estimating the matrixp-norm.
\newblock {\em Numerische Mathematik}, 62(1):539--555, 1992.

\bibitem{kannan2018adversarial}
Harini Kannan, Alexey Kurakin, and Ian Goodfellow.
\newblock Adversarial logit pairing.
\newblock {\em arXiv preprint arXiv:1803.06373}, 2018.

\bibitem{krizhevsky2009learning}
Alex Krizhevsky and Geoffrey Hinton.
\newblock Learning multiple layers of features from tiny images.
\newblock 2009.

\bibitem{kurakin2016adversarial}
Alexey Kurakin, Ian Goodfellow, and Samy Bengio.
\newblock Adversarial examples in the physical world.
\newblock {\em arXiv preprint arXiv:1607.02533}, 2016.

\bibitem{lyu2015unified}
Chunchuan Lyu, Kaizhu Huang, and Hai-Ning Liang.
\newblock A unified gradient regularization family for adversarial examples.
\newblock In {\em 2015 IEEE International Conference on Data Mining}, pages
  301--309. IEEE, 2015.

\bibitem{madry2018towards}
Aleksander Madry, Aleksandar Makelov, Ludwig Schmidt, Dimitris Tsipras, and
  Adrian Vladu.
\newblock Towards deep learning models resistant to adversarial attacks.
\newblock In {\em International Conference on Learning Representations}, 2018.

\bibitem{metzen2017detecting}
Jan~Hendrik Metzen, Tim Genewein, Volker Fischer, and Bastian Bischoff.
\newblock On detecting adversarial perturbations.
\newblock {\em arXiv preprint arXiv:1702.04267}, 2017.

\bibitem{miyato2018spectral}
Takeru Miyato, Toshiki Kataoka, Masanori Koyama, and Yuichi Yoshida.
\newblock Spectral normalization for generative adversarial networks.
\newblock In {\em International Conference on Learning Representations}, 2018.

\bibitem{miyato2018virtual}
Takeru Miyato, Shin-ichi Maeda, Masanori Koyama, and Shin Ishii.
\newblock Virtual adversarial training: a regularization method for supervised
  and semi-supervised learning.
\newblock {\em IEEE transactions on pattern analysis and machine intelligence},
  41(8):1979--1993, 2018.

\bibitem{miyato2015distributional}
Takeru Miyato, Shin-ichi Maeda, Masanori Koyama, Ken Nakae, and Shin Ishii.
\newblock Distributional smoothing with virtual adversarial training.
\newblock {\em arXiv preprint arXiv:1507.00677}, 2015.

\bibitem{moosavi2016deepfool}
Seyed~Mohsen Moosavi~Dezfooli, Alhussein Fawzi, and Pascal Frossard.
\newblock Deepfool: a simple and accurate method to fool deep neural networks.
\newblock In {\em Proceedings of 2016 IEEE Conference on Computer Vision and
  Pattern Recognition (CVPR)}, number EPFL-CONF-218057, 2016.

\bibitem{namkoong2017variance}
Hongseok Namkoong and John~C Duchi.
\newblock Variance-based regularization with convex objectives.
\newblock In {\em Advances in Neural Information Processing Systems}, pages
  2975--2984, 2017.

\bibitem{novak2018sensitivity}
Roman Novak, Yasaman Bahri, Daniel~A Abolafia, Jeffrey Pennington, and Jascha
  Sohl-Dickstein.
\newblock Sensitivity and generalization in neural networks: an empirical
  study.
\newblock In {\em International Conference on Learning Representations}, 2018.

\bibitem{papernot2017cleverhans}
Nicolas Papernot, Nicholas Carlini, Ian Goodfellow, Reuben Feinman, Fartash
  Faghri, Alexander Matyasko, Karen Hambardzumyan, Yi-Lin Juang, Alexey
  Kurakin, Ryan Sheatsley, Abhibhav Garg, and Yen-Chen Lin.
\newblock cleverhans v2.0.0: an adversarial machine learning library.
\newblock {\em arXiv preprint arXiv:1610.00768}, 2017.

\bibitem{papernot2016transferability}
Nicolas Papernot, Patrick McDaniel, and Ian Goodfellow.
\newblock Transferability in machine learning: from phenomena to black-box
  attacks using adversarial samples.
\newblock {\em arXiv preprint arXiv:1605.07277}, 2016.

\bibitem{raghu2017expressive}
Maithra Raghu, Ben Poole, Jon Kleinberg, Surya Ganguli, and Jascha~Sohl
  Dickstein.
\newblock On the expressive power of deep neural networks.
\newblock In {\em Proceedings of the 34th International Conference on Machine
  Learning-Volume 70}, pages 2847--2854. JMLR. org, 2017.

\bibitem{raghunathan2018certified}
Aditi Raghunathan, Jacob Steinhardt, and Percy Liang.
\newblock Certified defenses against adversarial examples.
\newblock In {\em International Conference on Learning Representations}, 2018.

\bibitem{roth2019odds}
Kevin Roth, Yannic Kilcher, and Thomas Hofmann.
\newblock The odds are odd: A statistical test for detecting adversarial
  examples.
\newblock In {\em International Conference on Machine Learning}, pages
  5498--5507, 2019.

\bibitem{sabour2015adversarial}
Sara Sabour, Yanshuai Cao, Fartash Faghri, and David~J Fleet.
\newblock Adversarial manipulation of deep representations.
\newblock {\em arXiv preprint arXiv:1511.05122}, 2015.

\bibitem{schmidt2018adversarially}
Ludwig Schmidt, Shibani Santurkar, Dimitris Tsipras, Kunal Talwar, and
  Aleksander Madry.
\newblock Adversarially robust generalization requires more data.
\newblock In {\em Advances in Neural Information Processing Systems}, pages
  5014--5026, 2018.

\bibitem{shaham2018understanding}
Uri Shaham, Yutaro Yamada, and Sahand Negahban.
\newblock Understanding adversarial training: Increasing local stability of
  supervised models through robust optimization.
\newblock {\em Neurocomputing}, 307:195--204, 2018.

\bibitem{sinha2018certifying}
Aman Sinha, Hongseok Namkoong, and John Duchi.
\newblock Certifying some distributional robustness with principled adversarial
  training.
\newblock In {\em International Conference on Learning Representations}, 2018.

\bibitem{szegedy2013intriguing}
Christian Szegedy, Wojciech Zaremba, Ilya Sutskever, Joan Bruna, Dumitru Erhan,
  Ian Goodfellow, and Rob Fergus.
\newblock Intriguing properties of neural networks.
\newblock {\em arXiv preprint arXiv:1312.6199}, 2013.

\bibitem{tanay2016boundary}
Thomas Tanay and Lewis Griffin.
\newblock A boundary tilting persepective on the phenomenon of adversarial
  examples.
\newblock {\em arXiv preprint arXiv:1608.07690}, 2016.

\bibitem{tropp2004topics}
Joel~Aaron Tropp.
\newblock {\em Topics in sparse approximation}.
\newblock PhD thesis, 2004.

\bibitem{tsuzuku2018lipschitz}
Yusuke Tsuzuku, Issei Sato, and Masashi Sugiyama.
\newblock Lipschitz-margin training: Scalable certification of perturbation
  invariance for deep neural networks.
\newblock In {\em Advances in Neural Information Processing Systems}, pages
  6541--6550, 2018.

\bibitem{wong2019wasserstein}
Eric Wong, Frank Schmidt, and Zico Kolter.
\newblock Wasserstein adversarial examples via projected sinkhorn iterations.
\newblock In {\em International Conference on Machine Learning}, pages
  6808--6817, 2019.

\bibitem{xu2009robustness}
Huan Xu, Constantine Caramanis, and Shie Mannor.
\newblock Robustness and regularization of support vector machines.
\newblock {\em Journal of Machine Learning Research}, 10(Jul):1485--1510, 2009.

\bibitem{xu2017feature}
Weilin Xu, David Evans, and Yanjun Qi.
\newblock Feature squeezing: Detecting adversarial examples in deep neural
  networks.
\newblock {\em arXiv preprint arXiv:1704.01155}, 2017.

\bibitem{yoshida2017spectral}
Yuichi Yoshida and Takeru Miyato.
\newblock Spectral norm regularization for improving the generalizability of
  deep learning.
\newblock {\em arXiv preprint arXiv:1705.10941}, 2017.

\bibitem{zagoruyko2016wide}
Sergey Zagoruyko and Nikos Komodakis.
\newblock Wide residual networks.
\newblock {\em arXiv preprint arXiv:1605.07146}, 2016.

\end{thebibliography}
\bibliographystyle{plain}

% !TEX root =  00_main.tex

\clearpage\newpage
\section{Appendix}

We begin with a short recap on robust optimization in linear regression in \textbf{Section~\ref{sec:ROforLinearRegression}}.
In \textbf{Section~\ref{sec:inputgradientregularization}} we lay out why input gradient regularization and fast gradient method (FGM) based adversarial training cannot in general effectively robustify against iterative adversarial attacks.
Similarly, in \textbf{Section~\ref{sec:frobeniusnormregularization}} we argue why we do not expect Frobenius norm regularization to work as well as data-dependent spectral norm regularization in robustifying against $\ell_2$-norm bounded iterative adversarial attacks.
In \textbf{Section~\ref{sec:effectofadversarialloss}} we analyze the power method like formulation of adversarial training for the softmax cross-entropy loss. 
The proof of our main Theorem and the corresponding Lemmas can be found in \textbf{Sections~\ref{sec:gradientofpnorm}\,-\,\ref{sec:proof}}. 
Additional implementation details can be found in \textbf{Sections~\ref{sec:extractingJacobian}\,-\,\ref{sec:hyperparamsweep}}.
Additional experimental results are presented from \textbf{Section~\ref{sec:furtherresultslargealpha}} on. %\,-\,\ref{sec:furtherresults}.

\subsection{Recap: Robust Optimization and Regularization for Linear Regression}
\label{sec:ROforLinearRegression}
In this section, we recapitulate the basic ideas on the relation between robust optimization and regularization presented in \cite{bertsimas2018characterization}.
Note that the notation deviates slightly from the main text: most importantly, the perturbations $\triangle$ refer to perturbations of the entire training data $\mX$, as is common in robust optimization.

Consider linear regression with additive perturbations $\triangle$ of the data matrix $\mX$
\begin{align}
\min_\w  \max_{\triangle \in \uSet} h\left( \y - (\mX + \triangle) \w \right),  
\label{eq:robust}
\end{align} 
where $h: \Re^n \to \Re$ denotes a loss function and $\uSet$ denotes the uncertainty set.
A general way to construct $\uSet$ is as a ball of bounded matrix norm perturbations $\uSet = \{ \triangle: \| \triangle \| \le \lambda \}$. Of particular interest are induced matrix norms
\begin{align}
\| \mathbf A \|_{g, h}  := \max_{\w} \bigg\{ \frac{h(\mathbf A \w)}{g(\w)} \bigg\}, 
\end{align}
where $h: \Re^n \to \Re$ is a semi-norm and $g: \Re^d \to \Re$ is a norm.
It is obvious that if $h$ fulfills the triangle inequality then one can upper bound 
\begin{equation}
\begin{aligned}
\label{eq:upper}
h\left( \y - (\mX + \triangle) \w \right) &
 \le h(\y - \mX\w) + h (\triangle \w) \\
& \le h(\y - \mX\w) + \lambda \, g(\w)\,, \quad \forall \triangle \in \uSet \,, 
\end{aligned}
\end{equation}
by using (a) the triangle inequality and (b) the definition of the matrix norm.

The question then is, under which circumstances both inequalities become equalities at the maximizing $\triangle^*$. 
It is straightforward to check \cite{bertsimas2018characterization} Theorem~1 that specifically we may choose the rank $1$ matrix
\begin{align}
\triangle^* = \frac{\lambda}{h(\mathbf r)} \mathbf r \v^\top, 
\end{align}
where 
\begin{align}
\mathbf r = \y - \mX\w \,\, , \,\,\ \v = \argmax_{\v: g^*(\v)=1} \left\{ \v^\top \w \right\}, 
\end{align}
with $g^*$ as the dual norm.
If $h(\mathbf r)=0$ then one can pick any $\mathbf u$ for which $h(\mathbf u)=1$ to form $\triangle = \lambda \mathbf u \v^\top$ 
(such a $\mathbf u$ has to exist if $h$ is not identically zero).
This shows that, for robust linear regression with induced matrix norm uncertainty sets,  robust optimization is equivalent to regularization.

\subsection{On Input Gradient Regularization and Adversarial Robustness}
\label{sec:inputgradientregularization}

In this Section we will lay out why input gradient regularization and fast-gradient method (FGM) based adversarial training cannot in general effectively robustify against iterative adversarial attacks. 

In Section~\ref{sec:spectralproperties} ``Alignment of adversarial perturbations with singular vectors'', we have seen that perturbations of iterative adversarial attacks strongly align with the dominant right singular vectors $\v$ of the Jacobian $\Jac{\x}$, see Figure~\ref{fig:Spectrum} (right). 
This alignment reflects also what we would expect from theory, since the dominant right singular vector $\v$ precisely defines the direction in input space along which a norm-bounded perturbation induces the maximal amount of signal-gain when propagated through the linearized network, see comment after Equation~\ref{eq:operatornorm}.

Interestingly, this tendency to align with dominant singular vectors  explains why input gradient regularization and fast gradient method (FGM) based adversarial training 
do not sufficiently protect against iterative adversarial attacks,
namely because the input gradient, resp.\ a single power method iteration, do not yield a sufficiently good approximation for the dominant singular vector in general.

In short, data-dependent operator norm regularization and iteartive attacks based adversarial training correspond to multiple forward-backward passes through (the Jacobian of) the network, while input gradient regularization and FGM based adversarial training corresponds to just a single forward-backward pass. 

More technically, the right singular vector $\v$ gives the direction in input space that corresponds to the steepest ascent of $f(\x)$ \textit{along} the left singular vector $\u$. In input gradient regularization, the logit space direction $\u$ is determined by the loss function (see Section~\ref{sec:effectofadversarialloss} for an example using the softmax cross-entropy loss), which in general is however neither equal nor a good enough approximation to the dominant left singular vector $\u$ of $\Jac{\x}$.

In other words, if we knew the dominant singular vector $\u$ of $\Jac{\x}$, we could compute the direction $\v$ in a single backward-pass. The computation of the dominant singular vector $\u$, however, involves multiple power-method rounds of forward-backward propagation through $\Jac{\x}$ in general.

\subsection{On Frobenius Norm Regularization and Adversarial Robustness}
\label{sec:frobeniusnormregularization}

In this Section, we briefly contrast data-dependent spectral norm regularization with Frobenius norm regularization.

As we have seen in Section~\ref{sec:data-dependentONR}, it is the dominant singular vector  corresponding to the largest singular value that determines the optimal adversarial perturbation to the Jacobian and hence the maximal amount of signal-gain that can be induced when propagating an $\ell_2$-norm bounded input vector through the linearized network. Writing $\Jac{\x} = \sigma_1 \u_1 \v_1^T + \sigma_2 \u_2 \v_2^T + …$ in SVD form, it is clear that the largest change in output for a given change in input aligns with $\v_1$. This crucial fact is only indirectly captured by regularizing the Frobenius norm, in that the Frobenius norm ($=$ sum of all singular values) is a trivial upper bound on the spectral norm ($=$ largest singular value). 

For that reason, we do not expect the Frobenius norm to work as well as data-dependent spectral norm regularization in robustifying against $\ell_2$-norm bounded iterative adversarial attacks.

\subsection{Adversarial Training with Cross-Entropy Loss}
\label{sec:effectofadversarialloss}
The adversarial loss function determines the logit-space direction $\u_k$ in the power method like formulation of adversarial training in Equation~\ref{eq:powermethodformulationofAT}.

Let us consider this for the softmax cross-entropy loss, defined as $\ell_{\rm adv}(y,\z) := - \log(s_y(\z))$,
\begin{align}
\ell_{\rm adv}(y,\z) = - \z_y + \log\Big( \sum_{k=1}^d \exp(\z_k) \Big) 
\end{align}
where the softmax is given by
\begin{align}
\ s_y(\z) &:=  \frac{\exp(\z_y)}{\sum_{k=1}^d \exp(\z_k)}
\end{align}
Untargeted $\ell_2$-PGA (forward pass)
\begin{align}
\hspace{-3mm}\left[ \tilde{\u}_k \leftarrow \left. \nabla_\z \ell_{\rm adv}(y,\z) \right|_{\z=f(\x_{k-1})} \right]_i = s_i(f(\x_{k-1})) - \delta_{iy}
\end{align}
Targeted $\ell_2$-PGA (forward pass)
\begin{align}
\left[ \tilde{\u}_k \leftarrow \left. -\nabla_\z \ell_{\rm adv}(y_{\rm adv},\z) \right|_{\z=f(\x_{k-1})} \right]_i = \delta_{i y_{\rm adv}} -  s_i(f(\x_{k-1})) 
\end{align}
Notice that the logit gradient can be computed in a forward pass by analytically expressing it in terms of the arguments of the loss function.

\newpage
Interestingly, for a temperature-dependent softmax cross-entropy loss, the logit-space direction becomes a ``label-flip'' vector in the low-temperature limit (high inverse temperature $\beta \to \infty$) 
where the softmax $s^\beta_y(\z) :=  \exp(\beta \z_y) / (\sum_{k=1}^d \exp(\beta \z_k) )$ converges to the argmax: $s^\beta(\z) \overset{\beta\to\infty}{\longrightarrow} \arg \max(\z)$.
E.g.\ for targeted attacks $\big[ \u_k^{\beta\to\infty} \big]_i = \delta_{i y_{\rm adv}} - \delta_{i y(\x_{k-1})}$.
This implies that in the high $\beta$ limit, iterative PGA finds an input space perturbation $\v_k$ that corresponds to the steepest ascent of $f$ along the ``label flip'' direction $\u_k^{\beta \to \infty}$.

\textbf{A note on canonical link functions.}
The gradient of the loss w.r.t.\ the logits of the classifier takes the form ``prediction - target'' for both the sum-of-squares error as well as the softmax cross-entropy loss. This is in fact a general result of modelling the target variable with a conditional distribution from the exponential family along with a canonical activation function.
This means that adversarial attacks try to find perturbations in input space that induce a logit perturbation that aligns with the difference between the current prediction and the attack target.

\bigskip
\subsection{Gradient of p-Norm}
\label{sec:gradientofpnorm}
The gradient of any p-norm is given by
\begin{align}
\nabla_\x || \x ||_p = \frac{\textnormal{sign}(\x) \odot | \x |^{p-1}}{|| \x ||_p^{p-1}}
\end{align}
where $\odot$\,, $\textnormal{sign}( \cdot )$ and $| \cdot |$ denote elementwise product, sign and absolute value.

In this section, we take a closer look at the $p\to\infty$ limit,
\begin{align}
\hspace{-2mm}\lim_{p\to\infty} \nabla_\x || \x ||_p  = |\mathcal{I}|^{-1} \textnormal{sign}(\x) \odot \mathbf{1}_{\mathcal{I}}
\end{align}
with $\mathbf{1}_{\mathcal{I}} = \sum_{i \in \mathcal{I}} \mathbf{e}_i$, where $\mathcal{I} := \{ j \in [1,...,n] : |x_j| = || \x ||_\infty \}$ denotes the set of indices at which $\x$ attains its maximum norm and $\mathbf{e}_i$ is the $i$-th canonical unit vector.

The derivation goes as follows. Consider
\begin{align}
\lim_{p\to\infty} \sum_{i} \frac{| x_i |^p}{ || \x ||_\infty^p} =  \lim_{p\to\infty} \sum_{i} \Big( \frac{| x_i |}{ || \x ||_\infty} \Big)^p = | \mathcal{I}|
\end{align}
Thus
\begin{align}
 \frac{|| \x ||_p^{p-1}}{ || \x ||_\infty^{p-1}} = \Big( \frac{\sum_i | x_i |^p}{ || \x ||_\infty^p} \Big)^{(p-1)/p} \overset{p\to\infty}{\longrightarrow} | \mathcal{I}|
\end{align}
since $| \mathcal{I}|^{(p-1)/p} \overset{p\to\infty}{\longrightarrow} | \mathcal{I}|$.
Now consider
\begin{align}
 \frac{ \textnormal{sign}(x_i) | x_i |^{p-1}}{ || \x ||_p^{p-1}} = \frac{ \textnormal{sign}(x_i) | x_i |^{p-1} / || \x ||_\infty^{p-1}}{ || \x ||_p^{p-1} / || \x ||_\infty^{p-1}} 
\end{align}
The numerator 
\begin{align}
\hspace{-3mm}\textnormal{sign}(x_i) | x_i |^{p-1} / || \x ||_\infty^{p-1} \overset{p\to\infty}{\longrightarrow} 
\begin{cases}
0 \quad \text{if} \quad i \notin \mathcal{I} \\
\textnormal{sign}(x_i) \,\ \text{if} \,\ i \in \mathcal{I}
\end{cases}
\end{align}
The denominator $|| \x ||_p^{p-1} / || \x ||_\infty^{p-1} \overset{p\to\infty}{\longrightarrow} | \mathcal{I} |$.
The rest is clever notation.

\newpage
\subsection{Optimal Perturbation to Linear Function}
\label{sec:optimallinearpert}

\begin{lemma}[Optimal Perturbation]\label{optimalpert_lemma}
Explicit expression for the optimal perturbation to a linear function under an $\ell_p$-norm constraint\footnote{Note that for $p\in\{1,\infty\}$ the maximizer might not be unique, in which case we simply choose a specific representative.}.
Let $\z$ be an arbitrary non-zero vector, e.g.\ $\z = \nabla_\x \ell_{\rm adv}$, and let $\v$ be a vector of the same dimension. Then,
\begin{align}
\v^* = \argmax_{\v : || \v ||_p \leq 1} \v^\top  \z = \frac{\textnormal{sign}(\z) \odot |\z|^{p^*-1}}{|| \z||_{p^*}^{p^*-1}}
\end{align}
where $\odot$\,, $\textnormal{sign}( \cdot )$ and $| \cdot |$ denote elementwise product, sign and absolute value,
and $p^*$ is the H{\"o}lder conjugate of $p$, given by $1/p + 1/p^* = 1$.
Note that the maximizer $\v^*$ is attained at a $\v$ with $|| \v ||_p = 1$, since $\v^\top \z$ is linear in $\v$.
\end{lemma}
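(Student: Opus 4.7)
The plan is to establish the lemma via Hölder's inequality, which provides a tight upper bound on linear functionals over $\ell_p$ balls, and then to verify that the proposed vector $\v^*$ is both feasible and attains this bound.

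First, I would apply Hölder's inequality to get $\v^\top \z \leq || \v ||_p\, || \z ||_{p^*} \leq || \z ||_{p^*}$ for every $\v$ with $|| \v ||_p \leq 1$, using $1/p + 1/p^* = 1$. This immediately caps the objective at $|| \z ||_{p^*}$, and recalls that equality in Hölder's inequality holds precisely when $|v_i|^p$ is proportional to $|z_i|^{p^*}$ with $\textnormal{sign}(v_i) = \textnormal{sign}(z_i)$, which is exactly the structural form of the proposed $\v^*$.

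Second, I would verify by direct computation that (i) $|| \v^* ||_p = 1$, so $\v^*$ is feasible, and (ii) $\v^{*\top} \z = || \z ||_{p^*}$, so the Hölder bound is attained. Both computations rely on the single algebraic identity $p(p^*-1) = p^*$, which follows from the Hölder conjugate relation. For (i), the numerator of $||\v^*||_p^p$ reduces to $\sum_i |z_i|^{p(p^*-1)} = || \z ||_{p^*}^{p^*}$, exactly cancelling the denominator $|| \z ||_{p^*}^{p(p^*-1)} = || \z ||_{p^*}^{p^*}$. For (ii), since $\textnormal{sign}(z_i) z_i = |z_i|$, the inner product collapses to $\sum_i |z_i|^{p^*} / || \z ||_{p^*}^{p^*-1} = || \z ||_{p^*}$. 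Combining these two facts with the Hölder upper bound establishes that $\v^*$ is a maximizer.

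The main subtleties lie in the boundary cases $p \in \{1, \infty\}$, where the formula must be interpreted carefully. For $p=\infty$ (so $p^*=1$), the exponent $p^*-1=0$ makes $|\z|^{p^*-1}$ the all-ones vector on the support of $\z$, recovering the standard $\ell_\infty$-ball maximizer $\v^* = \textnormal{sign}(\z)$. For $p=1$ (so $p^*=\infty$), the formula should be read as the limit $p^*\to\infty$, which concentrates mass on the coordinates where $|z_i|$ attains $|| \z ||_\infty$, as derived in Section~\ref{sec:gradientofpnorm}; the natural representative is $\textnormal{sign}(z_j)\mathbf{e}_j$ for any $j \in \argmax_i |z_i|$, with non-uniqueness occurring exactly when the argmax is not unique (matching the footnote). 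I expect the only real care needed in the write-up is justifying these two limiting interpretations; the core $1 < p < \infty$ argument is a direct consequence of Hölder's inequality and its equality conditions.
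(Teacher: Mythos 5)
Your proposal is correct and follows essentially the same route as the paper: Hölder's inequality with its equality conditions for $1<p<\infty$ (the paper solves for the proportionality constant $\gamma$, while you verify feasibility and attainment directly---a computation the paper itself includes as a sanity check), with the boundary cases $p\in\{1,\infty\}$ handled separately by the same limiting interpretation. Your choice of representative $\textnormal{sign}(z_j)\,\mathbf{e}_j$ for $p=1$ differs from the paper's $|\mathcal{I}|^{-1}\textnormal{sign}(\z)\odot\mathbf{1}_{\mathcal{I}}$, but this is immaterial since the paper explicitly notes that any convex combination of signed unit vectors on the maximal coordinates is optimal.
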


In particular, we have the following special cases, which is also how the Projected Gradient Descent attack is implemented in the cleverhans library \cite{papernot2017cleverhans}, 
\begin{equation}
\label{eq:optimalperturbationspecialcases}
\begin{aligned}
\hspace{-1mm}\argmax_{\v : || \v ||_p \leq 1} \v^\top  \z =
\begin{cases}
\ \z / || \z ||_2 \hspace{19mm}\, \text{for} \,\ p=2 \\[1mm]
\ \textnormal{sign}(\z) \hspace{19mm}\, \text{for} \,\ p=\infty \\[1mm]
\ |\mathcal{I}|^{-1} \textnormal{sign}(\z) \odot \mathbf{1}_{\mathcal{I}} \quad \text{for} \,\ p=1
\end{cases}
\end{aligned}
\end{equation}
The optimal $\ell_1$-norm constrained perturbation $\lim_{p^*\to\infty} \textnormal{sign}(\z) \odot | \z |^{p^*-1} /\, || \z ||_{p^*}^{p^*-1}$ can be taken to be 
$|\mathcal{I}|^{-1} \textnormal{sign}(\z) \odot \mathbf{1}_{\mathcal{I}}$
with $\mathbf{1}_{\mathcal{I}} = \sum_{i \in \mathcal{I}} \mathbf{e}_i$, where $\mathcal{I} := \{ j \in [1,...,n] : |z_j| = || \z ||_\infty \}$ denotes the set of indices at which $\z$ attains its maximum norm, and $\mathbf{e}_i$ is the $i$-th canonical unit-vector. Note that any other convex combination of unit-vectors from the set of indices at which $\z$ attains its maximum absolute value is also a valid optimal $\ell_1$-norm constrained perturbation. 

Finally, before we continue with the proof, we would like to note that the above expression for the maximizer has already been stated in \cite{moosavi2016deepfool}, although it hasn't been derived there.

\begin{proof}
By H{\"o}lder's inequality, we have for non-zero $\z$,
\begin{align}
\v^\top \z \leq || \v ||_p || \z ||_{p^*}\,\ , \,\ \text{w.\ equality iff}\,\ | \v |^p = \gamma | \z |^{p^*}
\end{align}
i.e.\ equality\footnote{Note that technically equality only holds for $p, {p^*} \in (1, \infty)$. But one can easily check that the explicit expressions for $p\in\{1,\infty\}$ are in fact optimal. See comment after Equation 1.1 in \cite{higham1992estimating}.} 
holds if and only if $| \v |^p$ and $| \z |^{p^*}$ are linearly dependent, where $| \cdot |$ denotes elementwise absolute-value, and where 
${p^*}$ is given by $1/p + 1/{p^*} = 1$.
The proportionality constant $\gamma$ is determined by the normalization requirement $|| \v ||_p = 1$. 
For $1 < p < \infty$, we have
\begin{align}
& || \v ||_p = ( \gamma \sum_{i=1}^n | z_i |^{p^*})^{1/p} \overset{!}=1  \implies \gamma = || \z ||_{p^*}^{-{p^*}} 
\end{align}
Thus, equality holds iff $| \v | = | \z |^{{p^*}/p} / || \z ||_{p^*}^{{p^*}/p}$, 
which implies that $\v^* = \textnormal{sign}(\z) \odot |\z|^{p^*-1} /|| \z||_{p^*}^{p^*-1}$,
since $\v$ must have the same $\textnormal{sign}$ as $\z$ and ${p^*}/p = {p^*}-1$.
For $p=1$, $\gamma = (| \mathcal{I} |\, || \z ||_\infty)^{-1}$, where $\mathcal{I} := \{ j \in [1,...,n] : | {z}_j| = || \z ||_\infty \}$, i.e.\ $| \mathcal{I} |$ counts the multiplicity of the maximum element in the maximum norm. 
It is easy to see that $|\mathcal{I}|^{-1} \textnormal{sign}(\z) \odot \mathbf{1}_{\mathcal{I}}$ is a maximizer in this case. 
For $p=\infty$, it is trivially clear that the maximizer $\v^*$ is given by $\textnormal{sign}(\z)$.
\end{proof}

\newpage
As an illustrative sanity-check, let us also verify that the above explicit expression for the optimal perturbation $\v^*$ has $\ell_p$ norm equal to one. Let $\z$ be an arbitrary non-zero vector of the same dimension as $\v$ and let $1 < p < \infty$, then
\begin{align}
|| \v^* ||_p &= \Big(\sum_{i=1}^n \Big|\frac{\textnormal{sign}(z_i)\,  |z_i|^{p^*-1}}{|| \z||_{p^*}^{p^*-1}}\Big|^{p}\Big)^{1/p} \\
&= \Big(\sum_{i=1}^n \Big(\frac{|z_i|^{p^*-1}}{|| \z||_{p^*}^{p^*-1}}\Big)^{p}\Big)^{1/p} \\
&= \Big(\sum_{i=1}^n \frac{|z_i|^{p^*}}{|| \z||_{p^*}^{p^*}}\Big)^{1/p}  = 1
\end{align}
where we have used that ($p^*-1)p = p^*$.
For $p = 1$, we have
\begin{align}
|| \v^* ||_1 &= \sum_{i=1}^n \Big| \frac{1}{|\mathcal{I}|} \textnormal{sign}(z_i)  \mathds{1}_{\{i \in \mathcal{I}\}} \Big| \\
&= \sum_{i=1}^n \Big| \frac{1}{|\mathcal{I}|} \mathds{1}_{\{i \in \mathcal{I}\}} \Big| = 1
\end{align}
where $\mathds{1}_{\{ \cdot \}}$ denotes the indicator function.
For $p = \infty$, we have
\begin{align}
|| \v^* ||_\infty &= \max_i |\textnormal{sign}(z_i) |  = 1
\end{align}

\bigskip
\subsection{Projection Lemma}
\label{sec:projectionlemma}
In this section, we prove the following intuitive Lemma.
\begin{lemma}[Projection Lemma]\label{projection_lemma}
(First part) Let $\v$ and \mbox{$\tilde\v\neq\mathbf{0}$} be two arbitrary (non-zero) vectors of the same dimension. Then
\begin{align}
\lim_{\alpha\to\infty} \Pi_{\{|| \cdot ||_p = 1\}}(\v +\alpha \tilde\v) = \frac{ \textnormal{sign}(\tilde\v) \odot |\tilde\v|^{p^*-1} }{ || \tilde\v ||_{p^*}^{p^*-1} }
\end{align}
where $\odot$\,, $\textnormal{sign}( \cdot )$ and $| \cdot |$ denote elementwise product, sign and absolute-value, and 
where $p^*$ denotes the H{\"o}lder conjugate of $p$ defined by $1/p + 1/p^* = 1$.
Moreover, if $\tilde\v$ is of the form $\tilde\v = \textnormal{sign}(\z) \odot |\z|^{p^*-1} / || \z ||_{p^*}^{p^*-1}$, for an arbitrary non-zero vector $\z$ and $p \in \{1,2,\infty\}$, then $\lim_{\alpha\to\infty} \Pi_{\{|| \cdot ||_p = 1\}}(\v +\alpha \tilde\v) = \tilde\v$. 
(Second part) Let $\x_{k-1} \in \mathcal{B}^p_{\epsilon}(\x)$ and let $\v_k\neq\mathbf{0}$ be an arbitrary non-zero vector of the same dimension.
Then
\begin{align}
\x_k & = \lim_{\alpha \to \infty} \Pi_{\mathcal{B}^p_{\epsilon}(\x) } (  \x_{k-1} + \alpha \v_k ) \\
&= \x + \lim_{\alpha \to \infty} \Pi_{\{|| \cdot ||_p = \epsilon\}}(\alpha {\v}_k) \\
&= \x + \epsilon\, \frac{ \textnormal{sign}(\v_k) \odot |\v_k|^{p^*-1} }{ || \v_k ||_{p^*}^{p^*-1} }
\end{align}
Moreover, if $\v_k$ is of the form $\v_k = \textnormal{sign}(\z) \odot |\z|^{p^*-1} / || \z ||_{p^*}^{p^*-1}$, for $p \in \{1,2,\infty\}$ and an arbitrary non-zero vector $\z$ (as is the case if $\v_k$ is given by the backward pass in Equation~\ref{eq:powermethodformulationofAT}), then $\x_k = \x + \epsilon \v_k$.
\end{lemma}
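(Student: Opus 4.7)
The plan is to reduce everything to Lemma~\ref{optimalpert_lemma}, by showing that as $\alpha\to\infty$ the projection becomes insensitive both to the finite translation $\v$ (or $\x_{k-1}-\x$) and to the overall scale of the argument. First, by definition
\begin{align*}
\Pi_{\{\|\cdot\|_p = 1\}}(\v + \alpha \tilde\v) = \argmin_{\v^* : \|\v^*\|_p = 1}  \|\v^* - (\v + \alpha \tilde\v)\|_2^2 .
\end{align*}
Expanding the squared norm, dropping terms independent of $\v^*$, and dividing by the positive quantity $2\alpha$, the problem becomes
\begin{align*}
\argmax_{\v^* : \|\v^*\|_p = 1}\Big\{ \v^{*\top} \tilde\v + \tfrac{1}{\alpha}\, \v^{*\top} \v - \tfrac{1}{2\alpha} \|\v^*\|_2^2\Big\} .
\end{align*}
The feasible set is compact and $\|\v^*\|_2$ is bounded there, so both $1/\alpha$ corrections vanish uniformly as $\alpha\to\infty$. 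Hence the argmax converges to $\argmax_{\v^*:\|\v^*\|_p = 1}\v^{*\top}\tilde\v$, which by Lemma~\ref{optimalpert_lemma} equals $\textnormal{sign}(\tilde\v)\odot|\tilde\v|^{p^*-1}/\|\tilde\v\|_{p^*}^{p^*-1}$.

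For the ``moreover'' claim of part one, the sanity check at the end of Section~\ref{sec:optimallinearpert} gives $\|\tilde\v\|_p = 1$ whenever $\tilde\v = \textnormal{sign}(\z)\odot|\z|^{p^*-1}/\|\z\|_{p^*}^{p^*-1}$, so $\tilde\v$ is feasible for the maximization. I would then verify directly that the closed form from part one returns $\tilde\v$ itself for each $p\in\{1,2,\infty\}$: the $p=2$ case reduces to $\tilde\v/\|\tilde\v\|_2 = \tilde\v$; the $p=\infty$ case gives $\textnormal{sign}(\tilde\v) = \tilde\v$ since $\tilde\v = \textnormal{sign}(\z)$ already has entries in $\{\pm 1\}$; and the $p=1$ case uses the $q\to\infty$ formula from Section~\ref{sec:gradientofpnorm} applied to $\tilde\v = |\mathcal{I}|^{-1}\textnormal{sign}(\z)\odot\mathbf{1}_\mathcal{I}$, whose absolute value is maximized precisely on $\mathcal{I}$.

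For part two, the first step is to change variables via $\y = \x^* - \x$ in the definition of the projection onto $\mathcal{B}^p_\epsilon(\x)$, which, by translation invariance of the $\ell_2$ distance, yields $\Pi_{\mathcal{B}^p_\epsilon(\x)}(\x_{k-1}+\alpha\v_k) = \x + \Pi_{\mathcal{B}^p_\epsilon(0)}\big((\x_{k-1}-\x) + \alpha\v_k\big)$. For $\alpha$ large enough and $\v_k\neq 0$, the argument is strictly outside the ball, so the projection onto the ball lands on the sphere $\{\|\cdot\|_p = \epsilon\}$; this sphere projection rescales as $\Pi_{\{\|\cdot\|_p = \epsilon\}}(\w) = \epsilon\,\Pi_{\{\|\cdot\|_p = 1\}}(\w/\epsilon)$, obtained by substituting $\y = \epsilon\z$ inside the argmin. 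Applying the first part to the rescaled argument $(\x_{k-1}-\x)/\epsilon + (\alpha/\epsilon)\v_k$ delivers both the middle and the right-hand equalities of the statement. The final ``$\x_k = \x + \epsilon\v_k$'' claim is then immediate from the same fixed-point calculation as in the ``moreover'' part above.

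The main technical care lies in justifying that the finite translations $\v$ and $\x_{k-1}-\x$ genuinely become negligible in the limit, and in handling the edge cases $p\in\{1,\infty\}$ where both the argmax in Lemma~\ref{optimalpert_lemma} and the closed-form expression are non-unique or require a limiting interpretation (as in the $q\to\infty$ derivation of Section~\ref{sec:gradientofpnorm}). Compactness of the unit $\ell_p$-sphere takes care of the first issue, while committing to the canonical representative $|\mathcal{I}|^{-1}\textnormal{sign}(\cdot)\odot\mathbf{1}_\mathcal{I}$ in the $p=1$ case and the entrywise $\textnormal{sign}(\cdot)$ in the $p=\infty$ case takes care of the second; these are the only places where the argument needs more than the bounded-perturbation reasoning of paragraph one.
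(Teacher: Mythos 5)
Your proposal is correct and follows essentially the same route as the paper's proof: expanding the projection as an $\argmin$, observing that the translation terms vanish relative to the $\alpha\,\v^{*\top}\tilde\v$ term so the limit reduces to the linear maximization of Lemma~\ref{optimalpert_lemma}, and verifying the fixed-point property for $p\in\{1,2,\infty\}$ by the same case checks (the paper notes your explicit-special-cases verification as an equivalent alternative to its $(p^*\!-\!1)(p^*\!-\!1)=p^*\!-\!1$ computation, and your translation-plus-rescaling treatment of the second part matches its change of variables and boundary-projection argument). Your added care about compactness of the unit sphere and the choice of canonical representatives for the non-unique $p\in\{1,\infty\}$ cases is consistent with the paper's footnoted convention and, if anything, slightly tightens the limit-of-argmin step.
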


\begin{proof}
\textbf{First part.}
\begin{align}
& \lim_{\alpha\to\infty} \Pi_{\{|| \cdot ||_p = 1\}}(\v +\alpha \tilde\v) \\
&= \lim_{\alpha\to\infty}\argmin_{\v^* : ||\v^*||_p = 1} || \v^* - \v -\alpha \tilde\v||_2 \\
&= \lim_{\alpha\to\infty}\argmin_{\v^* : ||\v^*||_p = 1}  (\v^* - \v -\alpha \tilde\v)^\top (\v^* - \v -\alpha \tilde\v) \\
&= \lim_{\alpha\to\infty}\argmin_{\v^* : ||\v^*||_p = 1}  \v^{*\top} \v^* -2 \v^{*\top} \v - 2\alpha \v^{*\top} \tilde\v + \text{const}
\end{align}
where the $\text{const}$ term is independent of $\v^*$ and thus irrelevant for the $\argmin$.
Next, we observe that in the limit $\alpha\to\infty$, the $\v^{*\top} \v^* -2 \v^{*\top} \v$ term vanishes relative to the $\alpha \v^{*\top} \tilde\v$ term, hence
\begin{align}
& \lim_{\alpha\to\infty} \Pi_{\{|| \cdot ||_p = 1\}}(\v +\alpha \tilde\v) \\
&= \argmin_{\v^* : ||\v^*||_p = 1} - \v^{*\top} \tilde\v \\
&= \argmax_{\v^* : ||\v^*||_p = 1} \v^{*\top} \tilde\v \\
& = \frac{ \textnormal{sign}(\tilde\v) \odot |\tilde\v|^{p^*-1} }{ || \tilde\v ||_{p^*}^{p^*-1} }
\end{align}
where in the last line we have used Equation~\ref{eq:ATmaximizer} for the optimal perturbation to a linear function under an $\ell_p$-norm constraint that we have proven in Lemma~\ref{optimalpert_lemma}.

Moreover, if $\tilde\v$ is of the form $\tilde\v = \textnormal{sign}(\z) \odot |\z|^{p^*-1} / || \z ||_{p^*}^{p^*-1}$, then
\begin{align}
& \frac{\textnormal{sign}(\tilde\v) \odot |\tilde\v|^{p^*-1} }{ || \tilde\v ||_{p^*}^{p^*-1} }
= \frac{\textnormal{sign}(\z) \odot |\z|^{(p^*-1)(p^*-1)} }{ || \,| \z |^{p^*-1}\, ||_{p^*}^{p^*-1} } 
\end{align}
Now, observe that for $p \in \{1,2,\infty \}$, the H{\"o}lder conjugate $p^*\in\{1,2,\infty\}$. In particular, for these values $p^*$ satisfies $(p^*\!-\!1)(p^*\!-\!1) = p^*\!-\!1$ (since $0\cdot0=0, 1\cdot1=1, \infty\cdot\infty=\infty$).

Thus, the numerator (and hence the direction) remains the same.
Moreover, we also have that $|| \,| \z |^{p^*-1}\, ||_{p^*}^{p^*-1} = || \z ||_{p^*}^{p^*-1}$ (and hence the magnitude remains the same, too).
For $p^*\!=\!1$, $|| \,| \z |^{0}\, ||_{1}^{0} = || \mathbf{1} ||_1^0 = 1 = || \z ||_1^0 $ for any non-zero $\z$.
For $p^*\!=\!2$, $|| \,| \z |^{1}\, ||_{2}^{1} = || \z ||_2$ for any~$\z$.
For the $p^*\!=\!\infty$ case, we consider the full expression $ \textnormal{sign}(\z) \odot |\z|^{(p^*-1)(p^*-1)} / || \,| \z |^{p^*-1}\, ||_{p^*}^{p^*-1} =  | \mathcal{I}'|^{-1} \textnormal{sign}(\z) \odot \mathbf{1}_{\mathcal{I}'} $ where $\mathcal{I}' := \{ j \in [1,...,n] : |z_j|^{p^*-1} = || | \z |^{p^*-1} ||_\infty \}$ denotes the set of indices at which $| \z |^{p^*-1}$ attains its maximum norm. It is easy to see that $\mathcal{I}' = \mathcal{I}$, i.e.\ the set of indices of maximal elements remains the same.

Thus, if $\tilde\v$ is of the form $\tilde\v = \textnormal{sign}(\z) \odot |\z|^{p^*-1} / || \z ||_{p^*}^{p^*-1}$, then it is a fix point of Equation~\ref{eq:ATmaximizer} and hence of the projection %$\lim_{\alpha\to\infty} \Pi_{\{|| \cdot ||_p = 1\}}(\v +\alpha \tilde\v)$,
\begin{align}
\lim_{\alpha\to\infty} \Pi_{\{|| \cdot ||_p = 1\}}(\v +\alpha \tilde\v) = \frac{\textnormal{sign}(\tilde\v) \odot |\tilde\v|^{p^*-1} }{ || \tilde\v ||_{p^*}^{p^*-1} } = \tilde\v \,\,\ \text{for} \,\,\ p^*\in\{1,2,\infty\}
\end{align}
Another way to see this is by checking that the operation $ \textnormal{sign}( \cdot ) \odot |\cdot|^{p^*-1} / || \cdot ||_{p^*}^{p^*-1}$, for $p^*\in\{1,2,\infty\}$, leaves the corresponding expressions for the optimal perturbation in the RHS of Equation~\ref{eq:optimalperturbationspecialcases} invariant.

Finally, note how the Projection Lemma implies that 
\begin{equation}
\lim_{\alpha \to \infty} \Pi_{\{|| \cdot ||_p = 1\}}(\alpha \z)\ = \argmax_{\v : || \v ||_p \leq 1} \v^\top \z \ ,
\end{equation}
for an arbitrary non-zero $\z$, which is an interesting result in its own right.

\newpage
\textbf{Second part.}

By definition of the orthogonal projection, we have
\begin{align}
\hspace{-2mm}\x_k & = \lim_{\alpha \to \infty} \Pi_{\mathcal{B}^p_{\epsilon}(\x) } (  \x_{k-1} + \alpha \v_k ) \\
&= \lim_{\alpha \to \infty} \argmin_{\x^* : || \x^* - \x ||_p \leq \epsilon} || \x^* \!-\! \x_{k-1} \!-\! \alpha \v_k ||_2 \\
&= \lim_{\alpha \to \infty} \argmin_{\x + \v^* : || \v^* ||_p \leq \epsilon} || \x \!+\! \v^* \!-\! \x_{k-1} \!-\! \alpha \v_k ||_2 \\
&= \x +  \lim_{\alpha \to \infty} \argmin_{\v^* : || \v^* ||_p \leq \epsilon} || \x \!+\! \v^* \!-\! \x_{k-1} \!-\! \alpha \v_k ||_2^2 \\
&= \x + \lim_{\alpha \to \infty} \argmin_{\v^* : || \v^* ||_p \leq \epsilon} \big\{ || \v^* \!-\! \alpha\v_k ||_2^2 + 2\v^{*\top}\text{const} \!+\! \text{const}^2 \big\} \\
&= \x + \lim_{\alpha \to \infty} \argmin_{\v^* : || \v^* ||_p \leq \epsilon} || \v^* - \alpha\v_k ||_2^2 \\
&= \x + \lim_{\alpha \to \infty} \argmin_{\v^* : || \v^* ||_p \leq \epsilon} || \v^* - \alpha\v_k ||_2 \\
&= \x + \lim_{\alpha \to \infty} \argmin_{\v^* : || \v^* ||_p = \epsilon} || \v^* - \alpha\v_k ||_2 \\
&= \x + \lim_{\alpha \to \infty} \Pi_{\{|| \cdot ||_p = \epsilon\}}(\alpha {\v}_k)
\end{align}
where (i) we have used that the $\text{const}^2$ term is independent of $\v^*$ and thus irrelevant for the $\argmin$, (ii) in the fourth-to-last line we have dropped all the terms that vanish relative to the limit $\alpha\to\infty$, and (iii) since $\alpha\v_k$ is outside the $\ell_p$-ball in the limit $\alpha\to\infty$, projecting into the $\ell_p$-ball $\{ \v^* : || \v^* ||_p \leq 1 \}$ is equivalent to projecting onto its boundary $\{ \v^* : || \v^* ||_p = 1 \}$.

By the first part of the Projection Lemma, we also have that 
\begin{align}
\x_k &= \x + \lim_{\alpha \to \infty} \Pi_{\{|| \cdot ||_p = \epsilon\}}(\alpha {\v}_k) \\
&= \x + \epsilon\, \frac{ \textnormal{sign}(\v_k) \odot |\v_k|^{p^*-1} }{ || \v_k ||_{p^*}^{p^*-1} } \\
&= \x + \epsilon \v_k \quad \text{if} \quad \v_k = \frac{\textnormal{sign}(\z) \odot |\z|^{p^*-1} }{ || \z ||_{p^*}^{p^*-1} }
\end{align}
for an arbitrary non-zero vector $\z$.
This completes the proof of the Lemma.

For illustrative purposes, we provide an alternative proof of the second part of the Lemma for $p \in \{ 2,\infty\}$, because for these values the projection can be written down explicitly away from the $\alpha\to\infty$ limit (the $p=1$ projection can be written down in a simple form in the $\alpha\to\infty$ limit only).

We first consider the \textbf{case} $p=2$.
The $\ell_2$-norm ball projection 
can be expressed as follows,
\begin{align}
\hspace{-3mm}& \Pi_{\mathcal{B}^2_{\epsilon}(\x) } (  \tilde{\x}_k )  = \x + \epsilon (\tilde{\x}_{k} - \x) / \max (\epsilon,  || \tilde{\x}_{k} - \x ||_2),
\end{align}
where $\tilde{\x}_{k} =  \x_{k-1} + \alpha \v_{k}$ and where the $\max (\epsilon,  || \tilde{\x}_{k} - \x ||_2)$ ensures that if $ || \tilde{\x}_{k} - \x ||_2 < \epsilon$ then $\x_{k} = \tilde{\x}_{k}$, i.e.\ we only need to project $\tilde{\x}_k$ if it is outside the $\epsilon$-ball $\mathcal{B}^2_{\epsilon}(\x)$.

Thus, in the limit $\alpha \to \infty$,
\begin{align}
& \lim_{\alpha \to \infty} \Pi_{\mathcal{B}^2_{\epsilon}(\x) } (  \x_{k-1} + \alpha \v_k ) \\
&= \lim_{\alpha \to \infty}  \x + \epsilon \frac{ \tilde{\x}_{k} - \x }{\max (\epsilon,  || \tilde{\x}_{k} - \x ||_2) } \\
& = \x + \epsilon \lim_{\alpha\to\infty} \frac{ \x_{k-1} + \alpha \v_{k} - \x}{\max (\epsilon,  || \x_{k-1} + \alpha \v_{k} - \x ||_2) } \\
& = \x + \epsilon \lim_{\alpha\to\infty} \frac{  \alpha ( \v_{k} + \frac{1}{ \alpha}(\x_{k-1} - \x) )}{\max (\epsilon,  \alpha || \v_{k} + \frac{1}{ \alpha}(\x_{k-1}  - \x) ||_2) } \\
& = \x + \epsilon \lim_{\alpha\to\infty} \frac{  \alpha ( \v_{k} + \frac{1}{ \alpha}(\x_{k-1} - \x) )}{ \alpha || \v_{k} + \frac{1}{ \alpha}(\x_{k-1}  - \x) ||_2 } \\[1mm]
&= \x + \epsilon\lim_{\alpha\to\infty}\Pi_{\{|| \cdot ||_2 = 1\}}(\alpha{\v}_k) \\[1mm]
& = \x + \epsilon \v_{k} / || \v_k ||_2 \label{eq:limitalphatoinfty2} \\[1mm]
&= \x + \epsilon \v_k \,\,\ \text{if} \,\ \v_k = \tilde\v_k / || \tilde\v_k ||_2
\end{align}
where in the fourth-to-last line we used that the $\max$ will be attained at its second argument in the limit $\alpha\to\infty$ since $|| \v_{k} + \frac{1}{ \alpha}(\x_{k-1}  - \x) ||_2 > 0$ for $\v_k \neq 0$,
and the last line holds if $\v_k$ is of the form $\tilde{\v}_k / || \tilde{\v}_k ||_2$. 

Next, we consider the \textbf{case} $p=\infty$.
The $\ell_\infty$-norm ball projection (clipping) can be expressed as follows,
\begin{align}
&  \Pi_{\mathcal{B}^{\infty}_{\epsilon}(\x) } ( \tilde{\x}_k )  = \x + \max( -\epsilon, \min( \epsilon, \tilde \x_k - \x )) \,,
\end{align}
where $\tilde{\x}_{k} =  \x_{k-1} + \alpha \v_{k}$ and where the $\max$ and $\min$ are taken \emph{elementwise}. Note that the order of the $\max$ and $\min$ operators can be exchanged, as we prove in the ``min-max-commutativity'' Lemma~\ref{max-min-lemma} below.

Thus, in the limit $\alpha \to \infty$,
\begin{align}
\hspace{-1mm}& \lim_{\alpha \to \infty} \Pi_{\mathcal{B}^\infty_{\epsilon}(\x) } (  \x_{k-1} + \alpha \v_k ) - \x \\
\hspace{-1mm}&=  \lim_{\alpha \to \infty} \max( -\epsilon, \min( \epsilon, \alpha \v_k + \x_{k-1} - \x)) \\
\hspace{-1mm}&= \hspace{-1mm}\lim_{\alpha \to \infty} \hspace{-1mm}\Big\{ \mathds{1}_{\{ \textnormal{sign}(\v_k) > 0 \}} \max( -\epsilon, \min( \epsilon, \alpha |\v_k| + \x_{k-1} \!-\! \x)) \nonumber\\ 
&+ \mathds{1}_{\{ \textnormal{sign}(\v_k) < 0 \}} \max( -\epsilon, \min( \epsilon, -\alpha |\v_k| + \x_{k-1} \!-\! \x)) \Big\} 
\end{align}
where in going from the second to the third line we used that $\v_k = \textnormal{sign}(\v_k) \odot |\v_k|$.

Next, observe that 
\begin{align}
\lim_{\alpha \to \infty} \max( -\epsilon, \min( \epsilon, \alpha |\v_k| + \x_{k-1} - \x)) \\
= \lim_{\alpha \to \infty} \min( \epsilon, \alpha |\v_k| + \x_{k-1} - \x) = \epsilon
\end{align}
since $\min( \epsilon, \alpha |\v_k| + \x_{k-1} - \x) > -\epsilon$.

Similarly, we have
\begin{align}
& \lim_{\alpha \to \infty} \max( -\epsilon, \min( \epsilon, -\alpha |\v_k| + \x_{k-1} - \x)) \\
&= \lim_{\alpha \to \infty} \min( \epsilon, \max( -\epsilon, -\alpha |\v_k| + \x_{k-1} - \x)) \\
&= \lim_{\alpha \to \infty}  \max( -\epsilon, -\alpha |\v_k| + \x_{k-1} - \x) \\
&= -\epsilon
\end{align}
where for the first equality we have used the ``min-max-commutativity'' Lemma~\ref{max-min-lemma} below, which asserts that the order of the $\max$ and $\min$ can be exchanged, while the second equality holds since $\max( -\epsilon, -\alpha |\v_k| + \x_{k-1} - \x) < \epsilon$.

With that, we can continue 
\begin{align}
& \lim_{\alpha \to \infty} \Pi_{\mathcal{B}^\infty_{\epsilon}(\x) } (  \x_{k-1} + \alpha \v_k ) \\
&= \x +  \epsilon \mathds{1}_{\{ \textnormal{sign}(\v_k) > 0 \}} - \epsilon\mathds{1}_{\{ \textnormal{sign}(\v_k) < 0 \}} \\[1mm]
&= \x + \epsilon \,\textnormal{sign}(\v_k) \\[1mm]
&= \x + \epsilon \lim_{\alpha \to \infty} \Pi_{\{|| \cdot ||_\infty = 1\}}(\alpha{\v}_k) \\
&= \x + \epsilon \, \v_k \,\,\ \text{if} \,\ \v_k = \textnormal{sign}(\tilde{\v}_k)
\end{align}
where the last line holds if $\v_k$ is of the form $\textnormal{sign}(\z)$, since $\textnormal{sign}(\v_k) = \textnormal{sign}( \textnormal{sign}(\tilde{\v}_k) ) = \textnormal{sign}(\tilde{\v}_k) = \v_k$.
Note that the last line can directly be obtained from the third-to-last line if $\v_k = \textnormal{sign}(\tilde{\v}_k)$.

\bigskip
Finally, for the \textbf{general case}, we provide the following additional intuition. 
Taking the limit ${\alpha\to\infty}$ has two effects: firstly, it puts all the weight in the sum $\x_{k-1} + \alpha \v_k$ on $\alpha\v_k$ and secondly, it takes every component of $\v_k$ out of the $\epsilon$-ball $\mathcal{B}^p_{\epsilon}(\x)$. As a result, $\Pi_{\mathcal{B}^p_{\epsilon}(\x) }$ will project $\alpha \v_k$ onto a point on the boundary of the $\epsilon$-ball, which is precisely the set $\{\v : || \v  ||_p = \epsilon\}$. Hence, $\lim_{\alpha \to \infty} \Pi_{\mathcal{B}^p_{\epsilon}(\x) } (  \x_{k-1} + \alpha \v_k ) = \x + \lim_{\alpha \to \infty} \Pi_{\{|| \cdot ||_p = \epsilon\}}(\alpha {\v}_k) $. 
\end{proof}

\bigskip
Finally, we provide another very intuitive yet surprisingly hard\footnote{The proof is easier yet less elegant if one alternatively resorts to case distinctions.} to prove result:
\begin{lemma}[Min-max-commutativity]\label{max-min-lemma}
The order of the \emph{elementwise} $\max$ and $\min$ in the projection (clipping) operator $\Pi_{\mathcal{B}^{\infty}_{\epsilon}(\x)} $ can be exchanged, i.e.\ 
\begin{align}
\max( -\epsilon, \min( \epsilon, \x)) = \min( \epsilon, \max( -\epsilon, \x))
\end{align}

\end{lemma}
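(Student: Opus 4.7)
The plan is to reduce to the scalar case using the elementwise nature of the operators, and then invoke the lattice-theoretic distributivity of $\max$ over $\min$ in order to sidestep any explicit case split. Concretely, since every operation appearing on both sides is applied componentwise to $\mathbf{x}$, it suffices to prove, for each scalar $x \in \mathbb{R}$ and each fixed $\epsilon \ge 0$, the identity $\max(-\epsilon, \min(\epsilon, x)) = \min(\epsilon, \max(-\epsilon, x))$.

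For this scalar statement I would apply the distributive identity
\begin{equation*}
\max\bigl(a,\, \min(b, c)\bigr) \;=\; \min\bigl(\max(a, b),\, \max(a, c)\bigr),
\end{equation*}
valid on any totally ordered set, with $a = -\epsilon$, $b = \epsilon$, $c = x$. This yields immediately
\begin{equation*}
\max(-\epsilon, \min(\epsilon, x)) \;=\; \min\bigl(\max(-\epsilon, \epsilon),\, \max(-\epsilon, x)\bigr) \;=\; \min\bigl(\epsilon,\, \max(-\epsilon, x)\bigr),
\end{equation*}
where the second equality uses only $\max(-\epsilon, \epsilon) = \epsilon$ for $\epsilon \ge 0$. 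Promoting this scalar identity back to the componentwise statement for vectors $\mathbf{x}$ is immediate.

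The main obstacle is precisely the underlying distributivity identity, since the author emphasizes avoiding case distinctions. A clean algebraic route is via the representations $\max(u, v) = \tfrac{1}{2}(u + v + |u - v|)$ and $\min(u, v) = \tfrac{1}{2}(u + v - |u - v|)$, which reduce the distributive law to an identity involving nested absolute values that can be verified using $|t| = \max(t, -t)$ and standard sub/super-additivity of $|\cdot|$. A more abstract and concise alternative is to cite the fact that any chain (in particular $(\mathbb{R}, \le)$) is a distributive lattice; this is the shortest argument but effectively hides the case analysis inside a lattice-theoretic black box. Either route completes the proof once combined with the scalar-to-vector reduction above.
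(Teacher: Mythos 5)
Your proof is correct, and it takes a genuinely different route from the paper's. The paper works directly with the vector identity using the indicator-function representations $\max(\mathbf{a},\x) = \mathbf{a} + \mathds{1}_{\{\mathbf{a}<\x\}}(\x-\mathbf{a})$ and $\min(\mathbf{a},\x) = \mathbf{a} + \mathds{1}_{\{\x<\mathbf{a}\}}(\x-\mathbf{a})$, then grinds through a chain of algebraic manipulations of nested indicators (splitting $\mathds{1}_{\{-\epsilon<\x\}}$ into $\mathds{1}_{\{-\epsilon<\x<\epsilon\}} + \mathds{1}_{\{\epsilon<\x\}}$, etc.) until the right-hand side emerges. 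You instead reduce to the scalar case and invoke the distributivity $\max\bigl(a,\min(b,c)\bigr) = \min\bigl(\max(a,b),\max(a,c)\bigr)$ valid on any chain, with $a=-\epsilon$, $b=\epsilon$, $c=x$, so the entire content of the lemma collapses to the single evaluation $\max(-\epsilon,\epsilon)=\epsilon$. Your argument is shorter and more conceptual, and it has the virtue of making explicit exactly where the sign condition $\epsilon \ge 0$ enters --- in the paper's proof the same condition is used only implicitly, in the step $\mathds{1}_{\{-\epsilon<\min(\epsilon,\x)\}} = \mathds{1}_{\{-\epsilon<\x\}}$, which fails for $\epsilon<0$. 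What the paper's version buys is self-containedness: it cites no external lattice fact, in keeping with the authors' stated preference (in their footnote) for avoiding case distinctions. Your fallback verification of the distributive law via $\max(u,v)=\tfrac{1}{2}(u+v+|u-v|)$ and $\min(u,v)=\tfrac{1}{2}(u+v-|u-v|)$ would restore that self-containedness at comparable length; citing that $(\Re,\le)$ is a distributive lattice is also perfectly legitimate, merely relocating the hidden case split rather than eliminating it --- much as the paper's indicator algebra does.
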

\begin{proof}

We are using the following representations which hold \textit{elementwise} for all $\mathbf{a}, \x \in \Re^n$:
\begin{align}
\hspace{-4mm}\max(\mathbf{a}, \x) &= \mathbf{a} + \max(0, \x \!-\! \mathbf{a}) = \mathbf{a} + \mathds{1}_{\{ \mathbf{a} < \x \}}( \x \!-\! \mathbf{a}) \\
\hspace{-4mm}\min(\mathbf{a}, \x) &= \mathbf{a} + \min(0, \x \!-\! \mathbf{a}) = \mathbf{a} + \mathds{1}_{\{ \x < \mathbf{a} \}}( \x \!-\! \mathbf{a}) 
\end{align}
where $\mathds{1}_{\{ \cdot \}}$ denotes the elementwise indicator function.

With these, we have
\begin{align}
& \max( -\epsilon, \min( \epsilon, \x)) \\
&= -\epsilon +  \mathds{1}_{\{ -\epsilon < \min(\epsilon, \x) \}}( \min(\epsilon, \x) + \epsilon) \\
&= -\epsilon +  \mathds{1}_{\{ -\epsilon < \x \}}( 2\epsilon + \mathds{1}_{\{ \x < \epsilon \}} (\x - \epsilon)) \\
&= -\epsilon +  2\epsilon \mathds{1}_{\{ -\epsilon < \x \}} + \mathds{1}_{\{ -\epsilon < \x < \epsilon \}} (\x - \epsilon) \\
&= -\epsilon +  2\epsilon (\mathds{1}_{\{ -\epsilon < \x < \epsilon \}} + \mathds{1}_{\{ \epsilon < \x \}} ) + \mathds{1}_{\{ -\epsilon < \x < \epsilon \}} (\x - \epsilon) \\
&= -\epsilon +  2\epsilon \mathds{1}_{\{ \epsilon < \x \}} + \mathds{1}_{\{ -\epsilon < \x < \epsilon \}} (\x + \epsilon) \\
&= -\epsilon +  2\epsilon ( 1 - \mathds{1}_{\{ \x < \epsilon \}}) + \mathds{1}_{\{ -\epsilon < \x < \epsilon \}} (\x + \epsilon) \\
&= \epsilon + \mathds{1}_{\{ \x < \epsilon \}} (-2\epsilon + \mathds{1}_{\{ -\epsilon < \x \}}(\x + \epsilon)) \\
&= \epsilon + \mathds{1}_{\{ \max(-\epsilon, \x) < \epsilon \}} ( \max(-\epsilon, \x) - \epsilon ) \\
&= \min( \epsilon, \max( -\epsilon, \x))
\end{align}
where we used that $\mathds{1}_{\{ -\epsilon < \min(\epsilon, \x) \}} = \mathds{1}_{\{ -\epsilon < \x \}}$ and 
$\mathds{1}_{\{ \max(-\epsilon, \x) < \epsilon \}} = \mathds{1}_{\{ \x < \epsilon \}}$.
\end{proof}

\newpage
\subsection{Proof of Main Theorem}
\label{sec:proof}

The conditions on $\epsilon$ and $\alpha$ can be considered specifics of the respective iteration method. 
The condition that $\epsilon$ be small enough such that $\mathcal{B}^p_{\epsilon}(\x)$ is contained in the ReLU cell around $\x$ ensures that  $\Jac{\x^*} = \Jac{\x}$ for all $\x^* \in \mathcal{B}^p_{\epsilon}(\x)$. 
The power-method limit $\alpha \to \infty$ means that in the update equations all the weight (no weight) is placed on the current gradient direction (previous iterates).

To proof the theorem we need to show that the updates for $\ell_p$-norm constrained projected gradient ascent based adversarial training with an $\ell_q$-norm loss on the logits reduce to the corresponding updates for data-dependent operator norm regularization in Equation~\ref{eq:datadepoperatornorm_powermethodlimit} under the above conditions on $\epsilon$ and $\alpha$.

\begin{proof}
For an $\ell_q$-norm loss on the logits of the clean and perturbed input $\ell_{\rm adv}( f(\x), f(\x^*)) = || f(\x) - f(\x^*) ||_q$, the corresponding $\ell_p$-norm constrained projected gradient ascent updates in Equation~\ref{eq:powermethodformulationofAT} are
\begin{equation}
\begin{aligned}
\label{eq:powermethodformulationofAT_ellq}
\u_k & \leftarrow \frac{ \textnormal{sign}(\tilde\u_k) \odot |\tilde\u_k|^{q-1} }{ ||\tilde\u_k||_q^{q-1} } \,\ , \,\ \tilde\u_k \leftarrow f(\x_{k-1})\! -\! f(\x)  \\
\v_k & \leftarrow \frac{ \textnormal{sign}(\tilde\v_k) \odot |\tilde\v_k|^{p^*-1} }{ || \tilde\v_k||_{p^*}^{p^*-1} } \,\ ,  \,\,\ \tilde{\v}_k \leftarrow  \left. \Jac{\x_{k-1}}^\top \u_k \right._{} \\
\x_{k} &\leftarrow \Pi_{\mathcal{B}^p_{\epsilon}(\x) } (  \x_{k-1} + \alpha \v_k ) 
\end{aligned}
\end{equation}
In the limit $\alpha\to\infty$, $\x_{k-1} = \x + \epsilon \v_{k-1}$ by the ``Projection Lemma'' (Lemma~\ref{projection_lemma}) and thus for small enough $\epsilon$, $f(\x_{k-1}) - f(\x) = \Jac{\x}(\x_{k-1}-\x) = \epsilon\Jac{\x}\v_{k-1}$ (equality holds because $\x_{k-1} \in \mathcal{B}^p_{\epsilon}(\x) \subset X(\phi_\x)$). 
Thus, the forward pass becomes
\begin{align}
\u_k & \leftarrow \frac{ \textnormal{sign}(\tilde\u_k) \odot |\tilde\u_k|^{q-1} }{ ||\tilde\u_k||_q^{q-1} } \,\ , \,\ \tilde\u_k \leftarrow \Jac{\x} \v_{k-1}     
\end{align}
For the backward pass, we have
\begin{align}
\tilde{\v}_k = \left. \Jac{\x_{k-1}}^\top \u_k \right._{} =  \Jac{\x}^\top \u_k  \, ,
\end{align}
since $\Jac{\x_k} = \Jac{\x}$ for all $\x_k \in \mathcal{B}^p_{\epsilon}(\x) \subset X(\phi_\x)$.
Note that the update equation for $\x_k$ is not needed since the Jacobians in the forward and backward passes don't depend on $\x_k$ for $\mathcal{B}^p_{\epsilon}(\x) \subset X(\phi_\x)$. %, since $\Jac{\x_k} = \Jac{\x}$ 
The update equations for $\ell_p$-norm constrained projected gradient ascent based adversarial training with an $\ell_q$-norm loss on the logits can therefore be written as
\begin{equation}
\begin{aligned}
\u_k & \leftarrow \textnormal{sign}(\tilde\u_k) \odot |\tilde\u_k|^{q-1} / ||\tilde\u_k||_q^{q-1} \,\ , \,\ \tilde\u_k \leftarrow \Jac{\x} \v_{k-1}  \\
\v_k & \leftarrow \textnormal{sign}(\tilde\v_k) \odot |\tilde\v_k|^{p^*-1} / || \tilde\v_k||_{p^*}^{p^*-1} ,  \ \tilde{\v}_k \leftarrow  \left. \Jac{\x}^\top \u_k \right._{}   
\end{aligned}
\end{equation}
which is precisely the power method limit of (p, q) operator norm regularization in Equation~\ref{eq:datadepoperatornorm_powermethodlimit}.
We have thus shown that the update equations to compute the adversarial perturbation and the data-dependent operator norm maximizer are exactly the same.

It is also easy to see that the objective functions used to update the network parameters 
for $\ell_p$-norm constrained projected gradient ascent based adversarial training with an $\ell_q$-norm loss on the logits of clean and adversarial inputs in Equation~\ref{eq:ATobjective} 
\begin{align}
& \E_{(\x,y )\sim \hat{P}}\Big[ \ell(y, f(\x)) + \lambda \max_{\x^* \in \mathcal{B}^p_{\epsilon}(\x)} || f(\x) - f(\x^*) ||_q \Big] 
\end{align}
is by the condition $\mathcal{B}^p_{\epsilon}(\x) \subset X(\phi_\x)$ and $\x^* = \x + \epsilon \v$
\begin{align}
\E_{(\x,y )\sim \hat{P}}\Big[ \ell(y, f(\x)) +   \lambda\epsilon \max_{\v^* : || \v^* ||_p \leq 1} || \J_{f(\x)} \v ||_q \Big] 
\end{align}
the same as that of data-dependent (p, q) operator norm regularization in Equation~\ref{eq:operatornormregularization}.

\end{proof}

\newpage
We conclude this section with a note on generalizing our Theorem to allow for activation pattern changes.
Proving such an extension for the approximate correspondence between adversarial training and data-dependent operator norm regularization that we observe in our experiments is highly non-trivial, 
as this requires to take into account how much ``nearby'' Jacobians can change based on the crossings of ReLU boundaries,
which is complicated by the fact that the impact of such crossings depends heavily on the specific activation pattern at input $\mathbf{x}$ and the precise values of the weights and biases in the network.
We consider this to be an interesting avenue for future investigations.

\subsection{Extracting Jacobian as a Matrix}
\label{sec:extractingJacobian}

Since we know that any neural network with its nonlinear activation function set to fixed values represents a linear operator, which, locally, is a good approximation to the neural network itself, we develop a method to fully extract and specify this linear operator in the neighborhood of any input datapoint $\x$. We have found the naive way of determining each entry of the linear operator by consecutively computing changes to individual basis vectors to be numerically unstable and therefore have settled for a more robust alternative:

In a first step, we run a set of randomly perturbed versions of $\x$ through the network (with fixed activation functions) and record their outputs at the particular layer that is of interest to us (usually the logit layer).
In a second step, we compute a linear regression on these input-output pairs to obtain a weight matrix $\W$ as well as a bias vector $\b$, thereby fully specifying the linear operator.
The singular vectors and values of $\W$ can be obtained by performing an SVD.

\subsection{Dataset, Architecture \& Training Methods} 
\label{sec:experimentalsetup}

We trained Convolutional Neural Networks (CNNs) with seven hidden layers and batch normalization on the CIFAR10 data set \cite{krizhevsky2009learning}.
The CIFAR10 dataset consists of $60$k $32\times32$ colour images in $10$ classes, with $6$k images per class. 
It comes in a pre-packaged train-test split, with $50$k training images and $10$k test images,
and can readily be downloaded from \url{https://www.cs.toronto.edu/~kriz/cifar.html}.

We conduct our experiments on a pre-trained standard convolutional neural network, employing 7 convolutional layers, augmented with BatchNorm, ReLU nonlinearities and MaxPooling.
The network achieves 93.5\% accuracy on a clean test set. Relevant links to download the pre-trained model can be found in our codebase.
For the robustness experiments, we also train a state-of-the-art Wide Residual Net (WRN-28-10) \cite{zagoruyko2016wide}. The network achieves 96.3\% accuracy on a clean test set.

We adopt the following standard preprocessing and data augmentation scheme:
Each training image is zero-padded with four pixels on each side, randomly cropped to produce a new image with the original dimensions and horizontally flipped with probability one half. 
We also standardize each image to have zero mean and unit variance when passing it to the classifier.

We train each classifier with a number of different training methods: 
(i) `Standard':\ standard empirical risk minimization with a softmax cross-entropy loss, 
(ii) `Adversarial':\ $\ell_2$-norm constrained projected gradient ascent (PGA) based adversarial training with a softmax cross-entropy loss,
(iii) `global SNR':\ global spectral norm regularization {\`a} la Yoshida \& Miyato~\cite{yoshida2017spectral}, 
and (iv) `d.-d.\ SNR':\ data-dependent spectral norm regularization. 
For the robustness experiments, we also train a state-of-the-art Wide Residual Network (WRN-28-10) \cite{zagoruyko2016wide}.

As a default attack strategy we use an $\ell_2$- \& $\ell_\infty$-norm constrained PGA white-box attack with cross-entropy adversarial loss $\ell_{\rm adv}$ and 10 attack iterations. 
We verified that all our conclusions also hold for larger numbers of attack iterations, however, due to computational constraints we limit the attack iterations to 10.
The attack strength~$\epsilon$ used for PGA was chosen to be the smallest value such that almost all adversarially perturbed inputs to the standard model are successfully misclassified, which is $\epsilon=1.75$ (for $\ell_2$-norm) and $\epsilon=8/255$ (for $\ell_\infty$-norm). 

The regularization constants of the other training methods were then chosen in such a way that they roughly achieve the same test set accuracy on clean examples as the adversarially trained model does, i.e.\ we allow a comparable drop in clean accuracy for regularized and adversarially trained models.
When training the derived regularized models, we started from a pre-trained checkpoint and ran a hyper-parameter search over number of epochs, learning rate and regularization constants.

Table~\ref{tbl:hypers} summarizes the test set accuracies and hyper-parameters for all the training methods we considered.

\begin{table}[t]
\centering
	\caption{CIFAR10 test set accuracies and hyper-parameters for the models and training methods we considered. The regularization constants were chosen such that the models achieve roughly the same accuracy on clean test examples as the adversarially trained model does. See  Table~\ref{tbl:hypersweep} for the full hyper-parameter sweep.}\label{tbl:hypers}
	\vskip 0.05in
	\tiny
	\begin{sc}
	\begin{tabular}{lll}
		\toprule
		Model \& Training Method & Acc & Hyper-parameters \\
		\midrule
		\emph{CNN7} \\[1mm]
		Standard Training & 93.5\% & --- \\[1mm]
		Adversarial Training ($\ell_2\!-\!norm$) & 83.6\% & $\epsilon = 1.75 , \alpha = 2\epsilon / \text{iters},$ iters $= 10$ \\[1mm]
		Adversarial Training ($\ell_\infty\!-\!norm$) & 82.9\% & $\epsilon = 8/255 , \alpha = 2\epsilon / \text{iters},$ iters $= 10$ \\[1mm]
		Data-dep.\ Spectral Norm\ & 84.6\% & $\lambda = 3 \cdot 10^{-2} , \epsilon = 1.75,$ iters $= 10$ \\[1mm]
		Data-dep.\ Operator Norm\ & 83.0\% & $\lambda = 3 \cdot 10^{-2} , \epsilon = 8/255,$ iters $= 10$ \\[1mm]
		Global Spectral Norm\ & 81.5\% & $\lambda = 3 \cdot 10^{-4},$ iters = $1, 10$ \\[2mm]
		
		\emph{WRN-28-10} \\[1mm]
		Standard Training & 96.3\% & --- \\[1mm]
		Adversarial Training ($\ell_2\!-\!norm$) & 91.8\% & $\epsilon = 1.75 , \alpha = 2\epsilon / \text{iters},$ iters $= 10$ \\[1mm]
		Data-dep.\ Spectral Norm\ & 91.3\% & $\lambda = 3 \cdot 10^{-1} , \epsilon = 1.75,$ iters $= 10$ \\[1mm]
		\bottomrule
	\end{tabular}
\end{sc}
\vspace{4mm}
\end{table}

\medskip
\subsection{Hyperparameter Sweep}
\label{sec:hyperparamsweep}
%Table~\ref{tbl:hypersweep} details the hyperparameter configurations we searched during training.
%Table~\ref{tbl:hypers} shows the final set of hyperparameters for the best performing models.

\begin{table*}[h!]
\centering
	\caption{Hyperparameter sweep during training. We report results for the best performing models in the main text.}\label{tbl:hypersweep}
	\vskip 0.05in
	\scriptsize
	\begin{sc}
	\begin{tabular}{lll}
		\toprule
		Training Method & Hyperparameter & Values Tested \\
		\midrule
        Adversarial Training & $\epsilon$ ($\ell_2$\!-\!norm) & $0.5, 0.75, 1.0, 1.25, 1.5, 1.75, 2.0,$\\
        & & $2.5, 2.75, 3.0, 3.25, 3.5, 3.75, 4.0$ \\
        & $\epsilon$ ($\ell_\infty$\!-\!norm) & $1/255, 2/255, 3/255, 4/255, 5/255, 6/255, 7/255, 8/255,$\\
        & & $9/255, 10/255, 11/255, 12/255, 13/255, 14/255, 15/255,$ \\
        & & $16/255, 17/255, 18/255$ \\
        & $\alpha$ & $\epsilon/\text{iters}, 2\epsilon/\text{iters}, 3\epsilon/\text{iters}, 4\epsilon/\text{iters}, 5\epsilon/\text{iters}$ \\
        & $\text{iters}$ & $1, 2, 3, 5, 8, 10, 15, 20, 30, 40, 50$ \\
        Global Spectral Norm Reg.\ & $\lambda$ & $1 \cdot 10^{-5}, 3 \cdot 10^{-5}, 1 \cdot 10^{-4}, 3 \cdot 10^{-4},$\\
        & & $1 \cdot 10^{-3}, 3 \cdot 10^{-3}, 1 \cdot 10^{-2}, 3 \cdot 10^{-2}, 1 \cdot 10^{-1}, 3 \cdot 10^{-1},$\\
        & & $1 \cdot 10^{0}, 3 \cdot 10^{0}, 1 \cdot 10^{1}, 3 \cdot 10^{1} $ \\
        & $\text{iters}$ & $1, 10$\\
		Data-dep.\ Spectral Norm Reg.\ & $\lambda$ & $1 \cdot 10^{-5}, 3 \cdot 10^{-5}, 1 \cdot 10^{-4}, 3 \cdot 10^{-4},$\\
        & & $1 \cdot 10^{-3}, 3 \cdot 10^{-3}, 1 \cdot 10^{-2}, 3 \cdot 10^{-2}, 1 \cdot 10^{-1}, 3 \cdot 10^{-1},$\\
        & & $1 \cdot 10^{0}, 3 \cdot 10^{0}, 1 \cdot 10^{1}, 3 \cdot 10^{1} $ \\
        & $\text{iters}$ & $1, 2, 3, 5, 8, 10, 15, 20, 30, 40, 50$ \\
		\bottomrule
	\end{tabular}
\end{sc}
\end{table*}

\newpage
\bigskip\bigskip\medskip
\textbf{\large Further Experimental Results}
\medskip

\subsection{Adversarial Training with Large $\alpha$}
\label{sec:furtherresultslargealpha}

Figure~\ref{fig:largealpha} shows the result of varying $\alpha$ in adversarial training.
As can be seen, the adversarial robustness initially rises with increasing $\alpha$, but after some threshold it levels out and does not change significantly even at very large values.

\begin{figure}[h!]
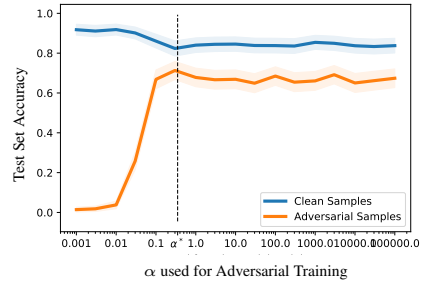

\centering
\begin{tabular}{@{}l@{\hspace{3pt}}c@{}}
\begin{turn}{90}\hspace{25pt} {\tiny Test Set Accuracy} \end{turn} & \adjincludegraphics[width=0.4\linewidth, trim={{0.07\width} {0.05\height} {0.0\width} {0.0\height}},clip]{plots/alpha.pdf} \\[-1mm]
 & {\tiny $\alpha$ used for Adversarial Training}
\end{tabular}
\vspace{-1mm}
    \caption{
    Test set accuracy on clean and adversarial examples for models adversarially trained with different PGA step sizes $\alpha$. 
    The dashed line indicates the $\alpha$ used when generating adversarial examples at test time.
    The $\epsilon$ in the AT projection was fixed to the value used in the main text.}
    \label{fig:largealpha}
\end{figure}

\subsection{Interpolating between AT and d.d.\ SNR} 

\begin{figure}[h!]
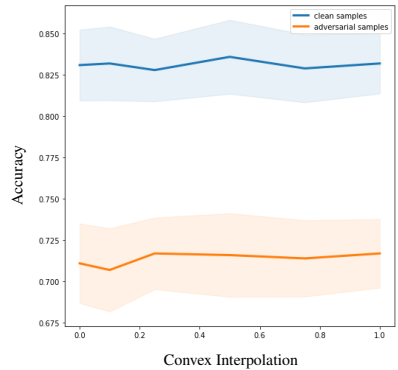

\centering
\begin{tabular}{@{}l@{\hspace{3pt}}c@{}}
\begin{turn}{90}\hspace{50pt} {\tiny Accuracy} \end{turn} & \adjincludegraphics[width=0.35\linewidth, trim={{0.0\width} {0.0\height} {0.0\width} {0.0\height}},clip]{plots5/interpolatingATddSNR.png} \\[-1mm]
 & \quad{\tiny Convex Interpolation}
\end{tabular}
\vspace{-1mm}
\caption{
Test set accuracy on clean and adversarial examples for different networks from scratch each with an objective function that convexly combines adversarial training with data-dependent spectral norm regularization in a way that allows us to interpolate between (i) the fraction of adversarial examples relative to clean examples used during adversarial training controlled by $\lambda$ in Eq.~\ref{eq:ATobjective} and (ii) the regularization parameter $\tilde\lambda$ in Eq.~\ref{eq:operatornormregularization}. The plot confirms that we can continuously trade-off the contribution of AT with that of d.d. SNR in the empirical risk minimization. 
}
\label{fig:interpolatingATddSNR}
\end{figure}

\subsection{Alignment of Adversarial Perturbations with Dominant Singular Vector} 

\label{sec:furtherresultsalignment}
Figure~\ref{fig:furtheralignment} shows the cosine-similarity of adversarial perturbations of mangitude $\epsilon$ with the dominant singular vector of $\Jac{\x}$, as a function of perturbation magnitude $\epsilon$. 
For comparison, we also include the alignment with random perturbations.
For all training methods, the larger the perturbation magnitude $\epsilon$, the lesser the adversarial perturbation aligns with the dominant singular vector of $\Jac{\x}$,
which is to be expected for a simultaneously increasing deviation from linearity. 
The alignment is similar for adversarially trained and data-dependent spectral norm regularized models and for both larger than that of global spectral norm regularized and naturally trained models.

\begin{figure}[h!]
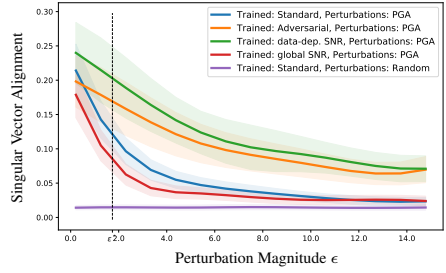

\centering
\begin{tabular}{@{}l@{\hspace{3pt}}c@{}}
\begin{turn}{90}\hspace{12pt} {\tiny Singular Vector Alignment} \end{turn} & \adjincludegraphics[width=0.4\columnwidth, trim={{0.037\width} {0.085\height} {0.0\width} {0.0\height}},clip]{plots/top_singular_vector} \\[-1mm]
 & \quad{\tiny Perturbation Magnitude $\epsilon$}
\end{tabular}
\vspace{-1mm}
    \caption{Alignment of adversarial perturbations with dominant singular vector of $\Jac{\x}$ as a function of perturbation magnitude $\epsilon$.
The dashed vertical line indicates the $\epsilon$ used during adversarial training. Curves were aggregated over $2000$ test samples.}
    \label{fig:furtheralignment}
\end{figure}

\newpage
\subsection{Activation Patterns} 

\begin{figure}[h!]
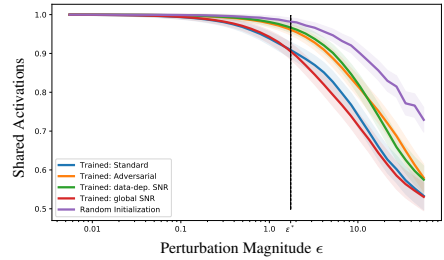

\centering
\begin{tabular}{@{}l@{\hspace{3pt}}c@{}}
\begin{turn}{90}\hspace{18pt} {\tiny Shared Activations} \end{turn} & \adjincludegraphics[width=0.4\linewidth, trim={{0.037\width} {0.085\height} {0.0\width} {0.068\height}},clip]{plots5/relu_masks_pgd.pdf} \\[-1mm]
 & {\tiny Perturbation Magnitude $\epsilon$}
\end{tabular}
\vspace{-1mm}
\caption{
Fraction of shared activations as a function of perturbation magnitude $\epsilon$ 
between activation patterns $\phi_\x \text{ and } \phi_{\x^*}$, 
where $\x$ is a data point sampled from the test set,
and $\x^*$ is an adversarially perturbed input, with perturbation magnitude $\epsilon$.}
\label{fig:furtherresultsactivationpatterns}
\end{figure}

\subsection{Global SNR with 10 iterations}
\label{sec:furtherresultsglobalsnr}

In the main section, we have implemented the baseline version of global SNR as close as possible to the descriptions in \cite{yoshida2017spectral}.
However, this included a recommendation from the authors to perform only a single update iteration to the spectral decompositions of the weight matrices per training step.
As this is computationally less demanding than the 10 iterations per training step spent on both adversarial training, as well as data-dependent spectral norm regularization, we verify that performing 10 iterations makes no difference to the method of \cite{yoshida2017spectral}.
Figures~\ref{fig:Linearity_globalsnr} and~\ref{fig:Accuracy_globalsnr} reproduce the curves for global SNR from the main part (having used 1 iteration) and overlap it with the same experiments, but done with global SNR using 10 iterations.
As can be seen, there is no significant difference.

\begin{figure}[h!]
\centering
\begin{tabular}{@{}l@{\hspace{1pt}}c@{\hspace{15pt}}l@{\hspace{1pt}}c@{}}
\begin{turn}{90}\hspace{20pt}  {\tiny Deviation from Linearity} \end{turn} & \adjincludegraphics[width=0.4\linewidth, trim={{0.042\width} {0.06\height} {0.0\width} {0.0\height}},clip]{plots5/linear} &
\begin{turn}{90}\hspace{25pt} {\tiny Top Singular Value} \end{turn} & \adjincludegraphics[width=0.4\linewidth, trim={{0.042\width} {0.06\height} {0.0\width} {0.0\height}},clip]{plots5/top_singular_value} \\[-1mm]
& \quad{\tiny Distance from $\x$} & & \quad{\tiny Distance from $\x$} 
\end{tabular}
\vspace{-1mm}
\caption{(Left) Deviation from linearity $|| \phi^{L-1}(\x + \z) - ( \phi^{L-1}(\x) + \J_{\phi^{L-1}}(\x) \z ) ||_2$ as a function of the distance $||\z||_2$ from $\x$ for random and adversarial perturbations $\z$. 
(Right) Largest singular value of the linear operator $\J_{f}(\x+\z)$ as a function of the magnitude $||\z||_2$ of random and adversarial perturbations $\z$. 
The dashed vertical line indicates the $\epsilon$ used during adversarial training. Curves were aggregated over $200$ samples from the test set.} 
\label{fig:Linearity_globalsnr}
\end{figure}

\begin{figure}[h!]
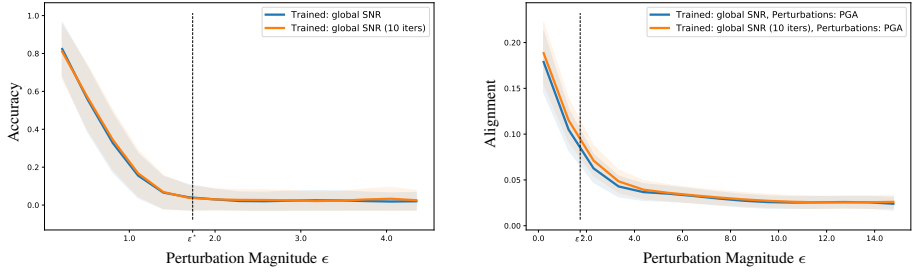

\centering
\begin{tabular}{@{}l@{\hspace{1pt}}c@{\hspace{15pt}}l@{\hspace{1pt}}c@{}}
\begin{turn}{90}\hspace{35pt} {\tiny Accuracy} \end{turn} & \adjincludegraphics[width=0.4\linewidth, trim={{0.037\width} {0.085\height} {0.0\width} {0.0\height}},clip]{plots5/adversarial_accuracy} &
\begin{turn}{90}\hspace{33pt} {\tiny Alignment} \end{turn} & \adjincludegraphics[width=0.4\linewidth, trim={{0.037\width} {0.085\height} {0.0\width} {0.0\height}},clip]{plots5/top_singular_vector} \\[-1mm]
& \quad{\tiny Perturbation Magnitude $\epsilon$} & & \quad{\tiny Perturbation Magnitude $\epsilon$}
\end{tabular}
\vspace{-1mm}
\caption{
(Left) Classification accuracy as a function of perturbation strength $\epsilon$. 
(Right) Alignment of adversarial perturbations with dominant singular vector of $\Jac{\x}$ as a function of perturbation magnitude $\epsilon$.
The dashed vertical line indicates the $\epsilon$ used during adversarial training. Curves were aggregated over $2000$ samples from the test set.}
\label{fig:Accuracy_globalsnr}
\end{figure}

\subsection{$\ell_\infty$-norm Constrained Projected Gradient Ascent}
\label{sec:furtherresults}

Additional results against $\ell_\infty$-norm constrained PGA attacks are provided in Figures~\ref{fig:Linearity_inf}~\&~\ref{fig:Accuracy_inf}.
Note that all adversarial and regularized training methods are robustifying against $\ell_2$ PGA, or regularizing the spectral (2, 2)-operator norm, respectively.
Results of adversarial training using $\ell_\infty$-norm constrained PGA and their equivalent regularization methods can be found in Section~\ref{sec:furtherresultsinf}.
The conclusions remain the same for all the experiments we conducted.
\medskip

\begin{figure}[h!]
\centering
\begin{tabular}{@{}l@{\hspace{1pt}}c@{\hspace{15pt}}l@{\hspace{1pt}}c@{}}
\begin{turn}{90}\hspace{20pt}  {\tiny Deviation from Linearity} \end{turn} & \adjincludegraphics[width=0.4\linewidth, trim={{0.042\width} {0.06\height} {0.0\width} {0.0\height}},clip]{plots/linear_inf} &
\begin{turn}{90}\hspace{23pt} {\tiny Top Singular Value} \end{turn} & \adjincludegraphics[width=0.4\linewidth, trim={{0.042\width} {0.06\height} {0.0\width} {0.0\height}},clip]{plots/top_singular_value_inf} \\[-1mm]
& \quad{\tiny Distance from $\x$} & & \quad{\tiny Distance from $\x$} 
\end{tabular}
\vspace{-1mm}
\caption{(Left) Deviation from linearity $|| \phi^{L-1}(\x + \z) - ( \phi^{L-1}(\x) + \J_{\phi^{L-1}}(\x) \z ) ||_2$ as a function of the distance $||\z||_2$ from $\x$ for random and $\ell_\infty$-PGA adversarial perturbations $\z$. 
(Right) Largest singular value of $\J_{\phi^{L-1}}(\x+\z)$ as a function of the magnitude $||\z||_2$ of random and $\ell_\infty$-PGA adversarial perturbations $\z$. 
The dashed vertical line indicates the $\epsilon$ used during adversarial training. Curves were aggregated over $200$ test samples.} 
\label{fig:Linearity_inf}
\end{figure}

\begin{figure}[h!]
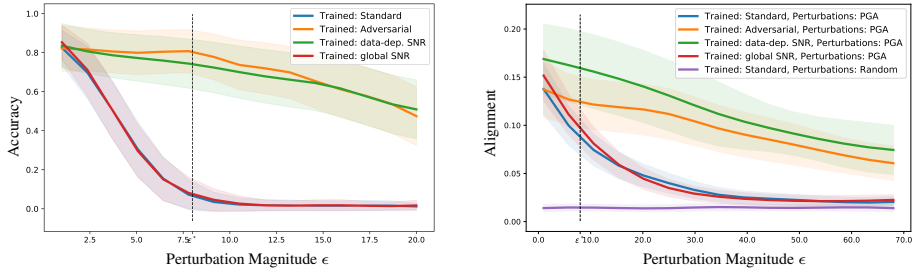

\centering
\begin{tabular}{@{}l@{\hspace{1pt}}c@{\hspace{15pt}}l@{\hspace{1pt}}c@{}}
\begin{turn}{90}\hspace{30pt} {\tiny Accuracy} \end{turn} & \adjincludegraphics[width=0.4\linewidth, trim={{0.037\width} {0.085\height} {0.0\width} {0.0\height}},clip]{plots/adversarial_accuracy_inf.png} &
\begin{turn}{90}\hspace{30pt} {\tiny Alignment} \end{turn} & \adjincludegraphics[width=0.4\linewidth, trim={{0.037\width} {0.085\height} {0.0\width} {0.0\height}},clip]{plots/top_singular_vector_inf} \\[-1mm]
& \quad{\tiny Perturbation Magnitude $\epsilon$} & & \quad{\tiny Perturbation Magnitude $\epsilon$}
\end{tabular}
\vspace{-1mm}
\caption{
(Left) Classification accuracy for an $\ell_2$-norm trained network on $\ell_\infty$-norm perturbations with $\epsilon$ (measured in 8-bit). 
(Right) Alignment of $\ell_\infty$-PGA adversarial perturbations with dominant singular vector of $\Jac{\x}$ as a function of perturbation magnitude $\epsilon$.
The dashed vertical line indicates the $\epsilon$ used during adversarial training. Curves were aggregated over $2000$ samples from the test set.}
\label{fig:Accuracy_inf}
\end{figure}

\clearpage

\subsection{Data-Dependent $\ell_\infty$-norm regularization}
\label{sec:furtherresultsinf}

Figure~\ref{fig:Accuracy_inf_inf} shows results against $\ell_\infty$-norm constrained PGD attacks when networks explicitly either use $\ell_\infty$-norm constrained adversarial training or, equivalently, regularize the ($\infty$, 2)-operator norm of the network.
The conclusions remain the same for all the experiments we conducted.
\medskip

\begin{figure}[h!]
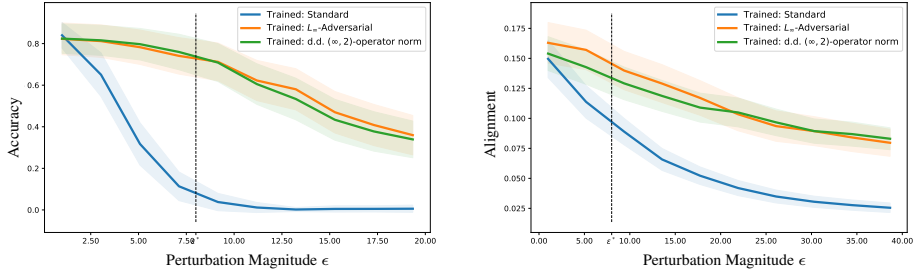

\centering
\begin{tabular}{@{}l@{\hspace{1pt}}c@{\hspace{15pt}}l@{\hspace{1pt}}c@{}}
\begin{turn}{90}\hspace{30pt} {\tiny Accuracy} \end{turn} & \adjincludegraphics[width=0.4\linewidth, trim={{0.037\width} {0.085\height} {0.0\width} {0.0\height}},clip]{plots/accuracy_inf_inf} &
\begin{turn}{90}\hspace{30pt} {\tiny Alignment} \end{turn} & \adjincludegraphics[width=0.4\linewidth, trim={{0.037\width} {0.085\height} {0.0\width} {0.0\height}},clip]{plots/alignment_inf_inf} \\[-1mm]
& \quad{\tiny Perturbation Magnitude $\epsilon$} & & \quad{\tiny Perturbation Magnitude $\epsilon$}
\end{tabular}
\vspace{-1mm}
\caption{
(Left) Classification accuracy for an $\ell_\infty$-norm trained network on $\ell_\infty$-norm perturbations with $\epsilon$ (measured in 8-bit). 
(Right) Alignment of $\ell_\infty$-PGA adversarial perturbations with dominant singular vector of $\Jac{\x}$ as a function of perturbation magnitude $\epsilon$.
The dashed vertical line indicates the $\epsilon$ used during adversarial training. Curves were aggregated over $2000$ samples from the test set.}
\label{fig:Accuracy_inf_inf}
\end{figure}

\subsection{SVHN}
\label{sec:svhn}

Figure~\ref{fig:Accuracy_svhn} shows results against $\ell_2$-norm constrained PGD attacks on the SVHN dataset.
As can be seen, the behavior is very comparable to our analogous experiment in the main section.
\medskip

\begin{figure}[h!]
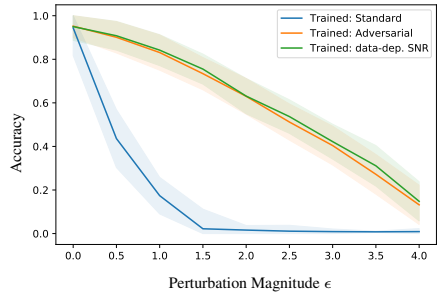

\centering
\begin{tabular}{@{}l@{\hspace{1pt}}c@{\hspace{15pt}}l@{\hspace{1pt}}c@{}}
\begin{turn}{90}\hspace{30pt} {\tiny Accuracy} \end{turn} & \adjincludegraphics[width=0.4\linewidth, trim={{0.045\width} {0.095\height} {0.0\width} {0.0\height}},clip]{plots/accuracy_svhn} \\
& \quad{\tiny Perturbation Magnitude $\epsilon$}
\end{tabular}
\vspace{-1mm}
\caption{
Classification accuracy on SVHN. Curves were aggregated over $2000$ samples from the test set.}
\label{fig:Accuracy_svhn}
\end{figure}

\end{document}